%% file: main.tex
\documentclass{article}

 \usepackage[preprint]{neurips_2026}

\usepackage[utf8]{inputenc} % allow utf-8 input
\usepackage[T1]{fontenc}    % use 8-bit T1 fonts
\usepackage{hyperref}       % hyperlinks
\usepackage{url}            % simple URL typesetting
\usepackage{booktabs}       % professional-quality tables
\usepackage{amsfonts}       % blackboard math symbols
\usepackage{nicefrac}       % compact symbols for 1/2, etc.
\usepackage{microtype}      % microtypography
\usepackage[table,dvipsnames]{xcolor}         % colors

\usepackage[ruled,linesnumbered]{algorithm2e}
\usepackage{amsmath, amssymb, mathtools, amsthm}
\usepackage{todonotes}
\usepackage{xspace}
\usepackage{enumitem}
\usepackage{multirow, multicol}
\usepackage{wrapfig}
\usepackage{graphicx}
\usepackage{bbm}
\usepackage{pifont}
\usepackage{stackengine}

\theoremstyle{definition}
\newtheorem{theorem}{Theorem}
\newtheorem{lemma}{Lemma}

\newtheorem{assumption}{Assumption}
\newtheorem{remark}{Remark}
\newcommand{\ours}{\texttt{FedSPM}\xspace}
\setcitestyle{numbers,square,comma,sort&compress}

\newcommand{\ie}{{\em i.e.,~}}
\title{\ours: Routing-Enabled Federated Learning under Dual Heterogeneity via Semiparametric Mixture}

\author{%
  Zijian Wang$^{1}$ \quad Pengfei Li$^{2}$ \quad Guangyu Yang$^{1}$  \quad Qiong Zhang$^{1}$\thanks{Correspondence to: Qiong Zhang (\texttt{qiong.zhang@ruc.edu.cn}) and Guangyu Yang (\texttt{yguangyu@ruc.edu.cn})} \\[0.5em]
  $^{1}$Institute of Statistics and Big Data, Renmin University of China\\
  $^{2}$Department of Statistics and Actuarial Science, University of Waterloo
}

\begin{document}
\maketitle

\begin{abstract}
Routing-prediction federated learning has emerged as a new paradigm that reframes inter-client heterogeneity as a resource for system-level intelligence: at inference time, the server routes each external query to the best-matched client for prediction.
Existing approaches, however, typically treat each client as internally homogeneous, overlooking latent subpopulations within local data.
For example, patients with the same diagnosis at one hospital may exhibit morphologically distinct disease subtypes.
The coexistence of inter-client and intra-client heterogeneity, which we call \emph{dual heterogeneity}, can impair both routing and prediction.
To address this challenge, we propose \ours, a routing-enabled semiparametric mixture framework that represents each client using client-specific latent components.
Each component combines a predictive distribution for classification with a feature distribution for routing.
To flexibly model feature distributions while effectively sharing information across clients, \ours models their density ratios relative to a common nonparametric measure estimated via empirical likelihood.
We develop a federated expectation-maximization algorithm that optimizes a tractable surrogate and prove convergence of the exact profiled objective at the standard $\mathcal{O}(1/\sqrt{T})$ rate when the surrogate errors are properly controlled.
Experiments on controlled benchmarks and real-world medical data demonstrate consistent improvements in routing and prediction under dual heterogeneity.
Code is available \href{https://github.com/zijianwang0510/FedSPM}{here}.
\end{abstract}

\input{sections/introduction}
\input{sections/method}
\input{sections/experiment}
\input{sections/conclusion}

%%%%%%%%%%%%%%%%%%%%%%%%%%%%%%%%%%%%%%%%%%%%%%%%%%%%%%%%%%%%
\bibliographystyle{abbrvnat}
\bibliography{reference}

%%%%%%%%%%%%%%%%%%%%%%%%%%%%%%%%%%%%%%%%%%%%%%%%%%%%%%%%%%%%
\appendix
\input{sections/appendix}

%%%%%%%%%%%%%%%%%%%%%%%%%%%%%%%%%%%%%%%%%%%%%%%%%%%%%%%%%%%%
\end{document}

%% file: sections/introduction.tex
\section{Introduction}
Federated learning (FL)~\cite{mcmahan2017fedavg} enables multiple clients to collaboratively train a model without centralizing their raw data.
In a typical FL system, a server distributes a shared model to participating clients, each client updates the model using its local data, and the server aggregates the resulting locally updated models into an improved global model.
This distributed training paradigm allows knowledge to be shared across clients such as hospitals~\cite{xu2021application} and mobile devices~\cite{li2020application}, making FL particularly promising when data are sensitive, geographically dispersed, or impractical to centralize~\cite{li2020survey,kairouz2021survey,yang2019federated}. 

Traditionally, \emph{inter-client heterogeneity}, where data distributions vary across clients, is viewed as an obstacle in FL.
Since clients optimize different local objectives, their gradients may drift in conflicting directions~\cite{karimireddy2020scaffold}, thereby slowing convergence~\cite{li2020theory} and degrading global model performance~\cite{zhao2018federated}.
In contrast, the recent \emph{routing-prediction} FL paradigm~\cite{wang2026feddrm} reframes inter-client heterogeneity as a useful signal of client specialization.
Like personalized FL~\cite{li2021ditto,ghosh2020ifca,arivazhagan2019fedper}, it learns specialized models for individual client domains. 
Beyond personalization, it further estimates how well an external query matches each client's data distribution.
At inference time, the server uses these distributional match scores to route the query to the most suitable client, whose specialized model then makes the final prediction, thereby turning client-specific expertise into system-level intelligence.

However, such a routing-prediction FL paradigm typically assumes that the data within each client are drawn from a homogeneous distribution, thereby overlooking \emph{intra-client heterogeneity}. 
This assumption is often violated in practice, as local data may arise from a mixture of latent components~\cite{marfoq2021fedem,wu2023fedgmm}. 
For example, within one hospital, dermoscopic images may involve different lesion types, anatomical sites, and patient age groups: cases with the same diagnosis may exhibit markedly different visual patterns~\cite{zalaudek2006age,changchien2007age}, whereas visually similar cases may correspond to different diagnoses~\cite{braga2008melanoma,carrera2017dermoscopic}.
Since these factors and their interactions are rarely fully observed or annotated, the resulting subgroup memberships are latent, making it infeasible to fit a separate model to each predefined group.
Ignoring such latent structure forces a homogeneous local model to fit a mixture of heterogeneous feature and predictive distributions, thereby degrading both routing and prediction accuracy. 
Taken together, intra-client and inter-client heterogeneity constitute what we call \emph{\underline{dual heterogeneity}}.
This raises our key question: \emph{how can routing and prediction be jointly improved under dual heterogeneity?}

No existing heterogeneous FL approach fully addresses this problem.
Methods centered on a global model mitigate client drift through regularization~\cite{li2020fedprox,acar2021feddyn}, aggregation reweighting~\cite{wang2020fednova,li2023fedlaw}, or refined optimization~\cite{hsu2019fedavgm,reddi2021fedopt}, but their shared predictor still struggles to adapt to client-specific distributions.
Conventional personalized methods, including local fine-tuning~\cite{wang2019ft}, regularization~\cite{dinh2020pfedme,li2021ditto}, client clustering~\cite{ghosh2020ifca,sattler2021clusterfl}, and representation learning~\cite{arivazhagan2019fedper,oh2022fedbabu}, improve prediction within each client's local domain, but provide no mechanism for server-side routing of external queries.
Routing-based 
personalization~\cite{wang2026feddrm} enables server-side routing, yet treats each client as internally homogeneous and therefore overlooks intra-client heterogeneity.
Mixture-model-based personalization captures intra-client heterogeneity~\cite{marfoq2021fedem,wu2023fedgmm}, but its assumption of shared component distributions across clients limits flexibility in modeling client-specific latent structures.

\begin{figure}[!ht]
    \centering
    \vspace{-0.5\baselineskip}
    \includegraphics[width=0.9\linewidth]{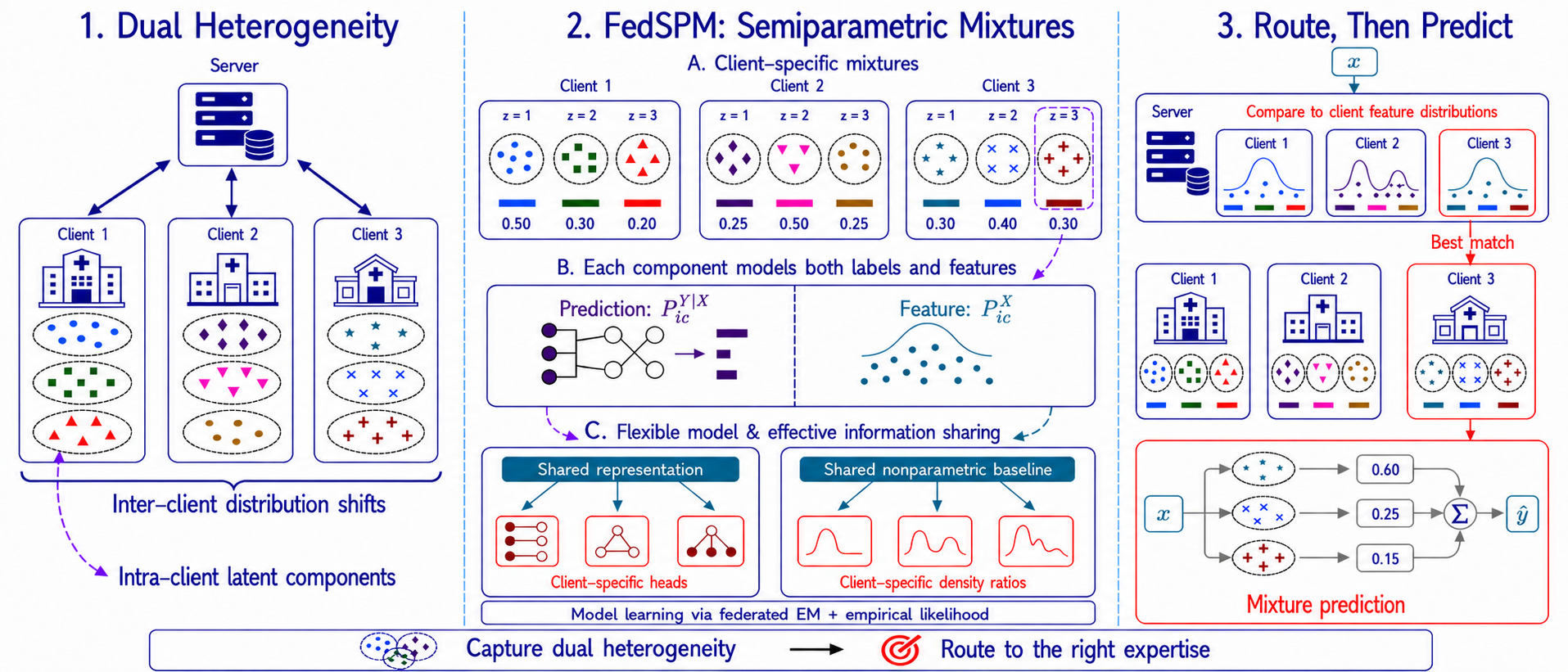}
    \caption{\textbf{Overview of the \ours framework.}}
    \vspace{-0.5\baselineskip}
    \label{fig:FedSPM}
\end{figure}

To address this challenge, we propose \ours (Fig.~\ref{fig:FedSPM}), a routing-enabled FL framework that represents each client as a mixture of latent components rather than a single homogeneous population.
This mixture discovers hidden variation within each client, while allowing its components to differ across clients captures inter-client distribution shifts.
Each component contains two complementary parts: a predictive distribution that relates features to labels and a feature distribution that characterizes the inputs covered by the component.
To balance model flexibility with effective information sharing, the predictive distributions combine shared representations with client-specific prediction heads, while the feature distributions use a density ratio model (DRM)~\cite{anderson1979drm} relative to a shared nonparametric baseline.
We learn the model using empirical likelihood (EL)~\cite{owen1990empirical} and a federated expectation-maximization (EM) algorithm.
At inference time, the server routes an external query to the most suitable client using the learned feature distributions, and the selected client combines its component-wise predictions to produce the final output.
Our main contributions are summarized as follows:
\begin{itemize}[leftmargin=*, topsep=2pt, itemsep=2pt, parsep=0pt]
\item We formulate routing-prediction FL under dual heterogeneity and propose \ours, a semiparametric mixture framework that flexibly captures latent variation within clients and distribution shifts across clients while enabling effective information sharing.

\item We develop a federated EM algorithm based on EL and establish convergence guarantees for the resulting nonconvex optimization under local stochastic gradient descent (SGD) with momentum.

\item We evaluate \ours on controlled benchmarks and a real-world medical dataset, demonstrating consistent improvements in both routing and prediction over competitive FL baselines.
\end{itemize}

%% file: sections/method.tex
\section{Method}

\textbf{Problem Formulation.}
Consider an FL system with $m$ clients for $K$-class classification.
Each client $i \in [m]\coloneqq\{1,\dots, m\}$ has a local dataset $\mathcal{D}_i\coloneqq\{(x_{ij},y_{ij})\}_{j\in [n_i]}$ with samples drawn independently from a client-specific distribution $P_i^{X,Y}$.
Let $n\coloneqq \sum_{i=1}^m n_i$ be the total sample size, $\rho_i\coloneqq n_i/n$ be the sample fraction of client $i$, and $\mathcal{D}\coloneqq\bigcup_{i=1}^m\mathcal{D}_i$ be the pooled dataset.

\subsection{Semiparametric Mixture Model}
To capture \emph{intra-client heterogeneity}, we introduce a latent component variable $Z\in[C]$ and model each observed distribution $P_i^{X,Y}$ through an augmented distribution $P_i^{X,Y,Z}$. 
Specifically, we assume
\begin{equation}\label{eq:mixture}
(x_{ij},y_{ij})\mid(z_{ij}=c)\sim P_{ic}^{X,Y},~z_{ij}\sim P_i^Z \text{ with } P_i^Z(\{c\})=\pi_{ic},~\forall c\in[C],~j\in[n_i].
\end{equation}
Here, $\pi_i\coloneqq(\pi_{i1},\dots,\pi_{iC})$ with $\pi_{ic}\ge0$ and $\sum_c\pi_{ic}=1$.
Marginalizing over $Z$ gives $P_i^{X,Y}=\sum_c\pi_{ic}P_{ic}^{X,Y}$.
Notably, we allow both the mixing weights $\pi_i$ and the component distributions $P_{ic}^{X,Y}$ to vary across clients, providing a flexible model of intra-client heterogeneity.

While allowing fully client-specific $P_{ic}^{X,Y}$ is expressive, it prevents effective information sharing across clients. 
We factor $P_{ic}^{X,Y}$ into its predictive distribution $P_{ic}^{Y|X}$ and feature distribution $P_{ic}^X$, and impose structure on both factors to share information while modeling \emph{inter-client heterogeneity}:

\textbf{Concept shift.}
We model the component predictive distribution of client $i$ as
\begin{equation}\label{eq:concept_shift}
P_{ic}^{Y|X}(\{k\}\mid x)\propto\exp(\alpha_{ikc}+\beta_{ikc}^\top g_{\theta_c}(x)), 
\end{equation}
where $g_{\theta_c}(x)$ is a \emph{shared} embedding for component $c$. 
The client-specific parameters $(\alpha_{ikc}, \beta_{ikc})$ allow the marginal predictive distribution $P_i^{Y|X}$ to vary across clients, hence capturing concept shift.

\textbf{Covariate shift.}
We model the component feature distribution of client $i$ via a DRM with respect to a shared baseline distribution $G$:
\begin{equation}\label{eq:covariate_shift}
dP_{ic}^{X}/dG(x) = \exp\big(\gamma_{ic} + \xi_{ic}^\top h_{\nu_c}(x)\big),
\end{equation}
where $dP_{ic}^{X}/dG$ is the Radon--Nikodym derivative of $P_{ic}^X$ relative to $G$, and $h_{\nu_c}$ is a \emph{shared} basis for component $c$. 
The client-specific tilting parameters $(\gamma_{ic}, \xi_{ic})$ induce differences in the marginal feature distribution $P_i^X$ across clients, capturing covariate shift while preserving a shared structure.

\textbf{Label shift.}
Finally, differences in the mixing weights $\pi_i$ and the component distribution $P_{ic}^{X,Y}$ jointly induce variation in the marginal label distribution $P_i^Y$ across clients, capturing label shift.

The baseline distribution $G$ in \eqref{eq:covariate_shift} remains unspecified.
Restricting $G$ to a parametric family imposes an unjustified distributional assumption, which limits model flexibility and increases the risk of misspecification.
Instead, we estimate $G$ via EL, a nonparametric likelihood framework that assigns unknown probability masses to the observed samples, subject to the DRM constraints (see App.~\ref{app:related_work} for related work on DRM and EL).
Specifically, $G$ is represented as a discrete distribution supported on the pooled observations, taking the form $G=\sum_{i,j}r_{ij}\delta_{x_{ij}}$ with unknown $r_{ij} \ge 0$.
Crucially, \emph{all samples across all clients jointly determine $G$}, serving as a key mechanism for information sharing.
To ensure that $G$ and each $P_{ic}^X$ define valid probability distributions, the masses $\{r_{ij}\}$ must satisfy $\int G(dx)=1$ and $\int \exp\left(\gamma_{i'c}+\xi_{i'c}^\top h_{\nu_c}(x)\right) G(dx)=1$ for all $i',c$, \ie
\begin{equation}\label{eq:constraints}
\sum_{i,j} r_{ij} = 1, \quad
\sum_{i,j} \exp\left(\gamma_{i'c} + \xi_{i'c}^\top h_{\nu_c}(x_{ij})\right) r_{ij} = 1,
\quad \forall i' \in [m], c \in [C].
\end{equation}
In summary, the predictive model \eqref{eq:concept_shift}, the DRM-EL feature model \eqref{eq:covariate_shift}, and the mixing weights $\pi_i$ together define our \emph{semiparametric mixture model}.

\begin{remark}[Identifiability and interpretability]
Our model is intended as a flexible approximation to the client-specific joint distribution $P_i^{X,Y}$, rather than as a tool for recovering identifiable or interpretable latent structures.
Its non-identifiability arises from both the neural network implementations of the representation maps $g_{\theta_c}$ and $h_{\nu_c}$, and the mixture structure itself. 
For the latter, if two components on client $i$ collapse, \ie $P_{ic_1}^{X,Y}=P_{ic_2}^{X,Y}$ for some $c_1\neq c_2$, then redistributing mass between $\pi_{ic_1}$ and $\pi_{ic_2}$ leaves $P_{i}^{X,Y}=\sum_c\pi_{ic}P_{ic}^{X,Y}$ unchanged.
Thus, the learned components should be viewed as auxiliary constructs for routing and prediction, not as recovered true subgroups.
\end{remark}

\begin{remark}[Comparison to existing work]
\label{remark:diff_prior_work}
Prior work such as~\cite{marfoq2021fedem, wu2023fedgmm} assume $P_{ic}^{X,Y}=P_{jc}^{X,Y}$ for all $i, j, c$, \ie identical component distributions across clients. 
This assumption is often unrealistic. For instance, even data from clinically similar patient groups can still differ substantially across hospitals due to site-specific acquisition protocols, imaging devices, and preprocessing pipelines.
\end{remark}

%%%%%%%%%%%%%%%%%%%%%%%%%%%%%%%%%%%%%%%%%%%%%%%%%%%%%%%%%%%%%%%%%%%%%

\subsection{Profile Log-EL and EM Algorithm}
We develop a practical learning procedure for the proposed semiparametric mixture model.
The main difficulty is that both the latent component assignments and the nonparametric baseline distribution $G$ are unknown.
We first profile out $G$ through EL, and then exploit the latent mixture structure to derive an EM algorithm that avoids direct optimization of the resulting intractable objective.

\textbf{Profile log-EL.}
Let $\zeta\coloneqq(\alpha,\beta,\theta,\gamma,\xi,\nu,\pi)$ be the parameters of interest. 
The log-EL based on $\mathcal{D}$ is:
\begin{equation*}
    \ell(\zeta, G) = \sum_{i,j} \log r_{ij} + \sum_{i,j} \log \biggl[ \sum_{c} \pi_{ic} \exp \left( \gamma_{ic} + \xi_{ic}^\top h_{\nu_c}(x_{ij}) \right) P_{ic}^{Y|X}(\{y_{ij}\}\mid x_{ij}) \biggr].
\end{equation*}
See App.~\ref{app:log-EL} for the derivation.
Since $G$ is a nuisance parameter, we work with the profile log-EL $p\ell(\zeta)\coloneqq\sup_G \ell(\zeta, G)$, where the supremum is taken under the constraints in \eqref{eq:constraints}.
By the Lagrange multiplier method, the optimal baseline weights are:
\begin{equation}\label{eq:optimal_G}
r_{ij}^*(\zeta)=n^{-1}\biggl\{1+\sum_{i',c}\lambda_{i'c}\left[\exp\left(\gamma_{i'c}+\xi_{i'c}^\top h_{\nu_c}(x_{ij})\right)-1\right] \biggr\}^{-1},
\end{equation}
where $\{\lambda_{ic}\}$ are the solutions to
\begin{equation}
\label{eq:Lagrange_multipliers_system}
\sum_{i,j}\frac{\exp\left(\gamma_{i'c}+\xi_{i'c}^\top h_{\nu_c}(x_{ij})\right)-1}{1+\sum_{i'',c'}\lambda_{i''c'}\left[\exp\left(\gamma_{i''c'}+\xi_{i''c'}^\top h_{\nu_{c'}}(x_{ij})\right)-1\right]}=0,\quad\forall i'\in [m], c\in [C].
\end{equation}
See App.~\ref{app:optimal_G} for the derivation. 
Substituting \eqref{eq:optimal_G} back into $\ell(\zeta,G)$ yields $p\ell(\zeta)$.
However, directly maximizing $p\ell(\zeta)$ remains computationally impractical because each evaluation requires solving the nonlinear system in \eqref{eq:Lagrange_multipliers_system} for the Lagrange multipliers $\{\lambda_{ic}\}$, making end-to-end optimization with standard automatic-differentiation tools such as \textit{PyTorch} infeasible.
 
\textbf{EM algorithm.}
To address this challenge, we treat the component assignments $\{z_{ij}\}$ as missing data and employ the EM algorithm to maximize $p\ell(\zeta)$.
Let $z_{ijc}\coloneqq\mathbbm{1}(z_{ij}=c)$ denote the indicator that the $j$-th sample on the $i$-th client belongs to the $c$-th subpopulation.
If $\{z_{ij}\}$ were observed, the complete-data profile log-EL would be
\begin{equation*}
p\ell^{\text{c}}(\zeta)=\sum_{i,j}\log r_{ij}^*(\zeta)+\sum_{i,j,c}z_{ijc}\biggl[\log P_{ic}^{Y|X}(\{y_{ij}\}\mid x_{ij})+\gamma_{ic}+\xi_{ic}^\top h_{\nu_c}(x_{ij})+\log \pi_{ic} \biggr].
\end{equation*}
See App.~\ref{app:complete-data_profile_log-EL} for derivation.
Since $\{z_{ij}\}$ are unobserved, EM alternates between estimating their posterior distributions under the current parameters and updating the parameters based on these estimates.
Specifically, the two steps are:

\textbf{\textit{E-step.}}
Given $\zeta^{(t)}$, we compute the posterior responsibilities for each $z_{ij}$ as
\begin{equation}
\label{eq:posterior}
w_{ijc}^{(t)} \coloneqq \mathbb{E}\left[z_{ijc}\middle| \mathcal{D};\zeta^{(t)}\right] \propto P_{ic}^{Y|X}(\{y_{ij}\}\mid x_{ij};\zeta^{(t)})\exp\left(\gamma_{ic}^{(t)}+(\xi_{ic}^{(t)})^\top h_{\nu_c^{(t)}}(x_{ij})\right)\pi_{ic}^{(t)}.
\end{equation}
The resulting profile $Q$-function is the conditional expectation of the complete-data profile log-EL:
\begin{equation}\label{eq:Q_decomposition}
Q^{(t)}(\zeta)\coloneqq\mathbb{E}\left[p\ell^{\text{c}}(\zeta)\middle|\mathcal{D};\zeta^{(t)}\right] = Q^{(t)}_1(\alpha, \beta, \theta) + Q^{(t)}_2(\gamma, \xi, \nu) + Q^{(t)}_3(\pi), 
\end{equation}
where
\begin{equation*}
Q_1^{(t)}(\alpha,\beta,\theta) \coloneqq \sum_{i,j,c}w_{ijc}^{(t)}\log P_{ic}^{Y|X}(\{y_{ij}\}\mid x_{ij}), \quad Q_3^{(t)}(\pi) \coloneqq \sum_{i,j,c}w_{ijc}^{(t)}\log\pi_{ic}, \text{ and }
\end{equation*}
\begin{equation*}
\resizebox{\linewidth}{!}{
    $\displaystyle Q_2^{(t)}(\gamma, \xi, \nu) \coloneqq \sum_{i,j,c}w_{ijc}^{(t)} \left[\gamma_{ic} + \xi_{ic}^\top h_{\nu_c}(x_{ij})\right] - \sum_{i,j}\log \biggl(1 + \sum_{i',c} \lambda_{i'c} \left[\exp\left(\gamma_{i'c} + \xi_{i'c}^\top h_{\nu_c}(x_{ij})\right) - 1\right]\biggr).$
    }
\end{equation*}

\textbf{\textit{M-step.}}
We update $\zeta^{(t+1)}=\arg\max Q^{(t)}(\zeta)$.
Since $Q_1^{(t)}$, $Q_2^{(t)}$, and $Q_3^{(t)}$ are separable, the three parameter blocks can be optimized independently.
For convenience, we refer to $(\alpha,\beta,\theta)$ as the \emph{supervised block}, and to $(\gamma,\xi,\nu)$ as the \emph{DRM block}.
For the mixing weights, maximizing $Q^{(t)}_3$ under $\sum_c \pi_{ic} = 1$ yields
\begin{equation}
\label{eq:update_pi}
\pi_{ic}^{(t+1)}=\frac{1}{n_i}\sum_{j=1}^{n_i}w_{ijc}^{(t)}.
\end{equation}
See App.~\ref{app:EM_algorithm_details} for derivation details of the EM algorithm.
For the supervised block, maximizing $Q_1^{(t)}$ is equivalent to minimizing a weighted cross-entropy loss, which can be directly solved via standard gradient-based methods:
$\left(\alpha^{(t+1)},\beta^{(t+1)},\theta^{(t+1)}\right)=\arg\max Q^{(t)}_1(\alpha,\beta,\theta)$.
For the DRM block, directly maximizing $Q_2^{(t)}$ faces the same computational bottleneck as $p\ell(\zeta)$: each evaluation still requires solving the nonlinear system for the Lagrange multipliers $\{\lambda_{ic}\}$. 
In contrast, the EM algorithm offers a pivotal resolution to this bottleneck: at any critical point of $Q^{(t)}_2$, each $\lambda_{ic}$ admits the analytic form $n^{-1}\sum_{j=1}^{n_i}w_{ijc}^{(t)}$ (see App.~\ref{app:analytical_form_of_lagrange_multipliers}). 
Substituting this expression back shows that every critical point of $Q^{(t)}_2$ is also that of the tractable surrogate
\begin{equation*}
\resizebox{\linewidth}{!}{
$\displaystyle\widetilde{Q}_2^{(t)}(\gamma,\xi,\nu)\coloneqq \sum_{i,j,c}w_{ijc}^{(t)}\left[\gamma_{ic}+\xi_{ic}^\top h_{\nu_c}(x_{ij})\right]-\sum_{i,j}\log\biggl(\sum_{i',j',c}w_{i'j'c}^{(t)}\exp\left(\gamma_{i'c}+\xi_{i'c}^\top h_{\nu_c}(x_{ij})\right)\biggr).$}
\end{equation*}
We optimize $\widetilde{Q}^{(t)}_2$ instead, giving the update $\left(\gamma^{(t+1)},\xi^{(t+1)},\nu^{(t+1)}\right)=\arg\max\widetilde{Q}_2^{(t)}(\gamma,\xi,\nu)$.

\subsection{Federated Training and Inference}
\label{sec:fl}

Algorithm~\ref{algo:fedspm} in App.~\ref{app:algorithm} presents \ours, which extends the centralized EM procedure to the client-server setting.
As in standard FL, \ours follows a local-update-then-aggregate paradigm.
In communication round $t$, the server broadcasts the latest shared parameters $(\theta^{(t)},\gamma^{(t)},\xi^{(t)},\nu^{(t)})$ to all clients.
Each client $i$ then performs a local E-step followed by a local M-step:

\textbf{\textit{Local E-step.}}
Client $i$ computes $\{w_{ijc}^{(t)}\}_{j\in[n_i],c\in[C]}$ via \eqref{eq:posterior} using only its own dataset $\mathcal{D}_i$.

\textbf{\textit{Local M-step.}}
After the local E-step, client $i$ updates the three parameter blocks separately. 
First, since the mixing weights $\pi_i$ are client-specific, client $i$ updates them locally via \eqref{eq:update_pi} without server aggregation.
For the supervised block, client $i$ updates $(\alpha_i^{(t+1)},\beta_i^{(t+1)},\theta_i^{(t+1)})$ by maximizing
\begin{equation*}
Q_{1i}^{(t)}(\alpha, \beta, \theta) \coloneqq \sum_{j,c} w_{ijc}^{(t)} \log P_{ic}^{Y|X}(\{y_{ij}\}\mid x_{ij}),
\end{equation*}
which depends only on $\mathcal{D}_i$ and $\{w_{ijc}^{(t)}\}_{j\in[n_i],c\in[C]}$ available on client $i$.
For the DRM block, client $i$ updates $(\gamma_i^{(t+1)}, \xi_i^{(t+1)}, \nu_i^{(t+1)})$ by maximizing 
\begin{equation*}
\widetilde{Q}_{2i}^{(t)}(\gamma,\xi,\nu)\coloneqq \sum_{j,c}w_{ijc}^{(t)}\left[\gamma_{ic}+\xi_{ic}^\top h_{\nu_c}(x_{ij})\right]-\sum_{j}\log\biggl(\sum_{i',c}\tau_{i'c}^{(t)}\exp\left(\gamma_{i'c}+\xi_{i'c}^\top h_{\nu_c}(x_{ij})\right)\biggr),
\end{equation*}
where $\tau_{i'c}^{(t)}\coloneqq\sum_{j'=1}^{n_{i'}}w_{i'j'c}^{(t)}$ denotes the total responsibility assigned to component $c$ on client $i'$.
Unlike $Q_{1i}^{(t)}$, client $i$ cannot evaluate $\widetilde{Q}_{2i}^{(t)}$ using only its local information, because $\widetilde{Q}_{2i}^{(t)}$ involves responsibilities from other clients.
However, this cross-client dependence enters only through the summary statistics $\{\sum_{j'=1}^{n_{i'}}w_{i'j'c}^{(t)}\}_{i'\in[m],c\in[C]}$.
Accordingly, after the local E-step, each client $i$ transmits $\{\tau_{ic}^{(t)}\}_{c\in[C]}$ to the server, and the server broadcasts the collection $\{\tau_{ic}^{(t)}\}_{i\in[m],c\in[C]}$ to all clients.
This additional communication consists of only $m\times C$ scalars, which is negligible relative to model transmission and does not reveal raw data or per-sample responsibilities. 
Armed with these summary statistics, client $i$ can maximize both $Q_{1i}^{(t)}$ and $\widetilde{Q}_{2i}^{(t)}$ via any gradient-based \texttt{LocalSolver}.

\textbf{Routing and prediction.}
At inference time, given an external query $x$, the server first routes it to the most suitable client using maximum a posteriori estimation, with the sample fraction $\rho_i$ as the prior and the client feature density relative to the baseline distribution $G$ as the likelihood:
\begin{equation}\label{eq:client_routing}
i^*\coloneqq \arg\max_{i\in [m]}\left\{\rho_i \cdot dP_i^{X}/dG(x)\right\} = \arg\max_{i\in [m]}\biggl\{\rho_i\sum_{c}\pi_{ic}\exp\big(\gamma_{ic} + \xi_{ic}^\top h_{\nu_c}(x)\big)\biggr\}.
\end{equation}
Let $P_i^{Z|X}$ denote the conditional distribution of the latent component given the query on client $i$.
The selected client $i^*$ then makes the final prediction:
\begin{equation}\label{eq:local_prediction}
\widehat{y}\coloneqq \arg\max_{k\in [K]} P_{i^*}^{Y|X}(\{k\}\mid x)= \arg\max_{k\in [K]}\biggl\{\sum_{c} P_{i^*c}^{Y|X}(\{k\}\mid x)P_{i^*}^{Z|X}(\{c\}\mid x)\biggr\}.
\end{equation}

\subsection{Convergence Analysis}
We analyze the convergence of \ours when \texttt{LocalSolver} is instantiated as local SGD~\cite{stich2018localsgd} with momentum. 
Since the mixing weights and the supervised block follow standard EM/GEM updates and inherit the convergence guarantees established by~\cite{marfoq2021fedem}, we focus exclusively on the DRM block $\phi\coloneqq(\gamma,\xi,\nu)$.
For convenience, we cast the optimization of $\phi$ as a standard minimization problem:
\begin{equation*}
    F(\zeta)\coloneqq -p\ell(\zeta),\quad
f^{(t)}(\phi)\coloneqq -Q_2^{(t)}(\phi),\quad
\widetilde f^{(t)}(\phi)\coloneqq -\widetilde Q_2^{(t)}(\phi),\quad
\widetilde f_i^{(t)}(\phi)\coloneqq -\widetilde Q_{2i}^{(t)}(\phi).
\end{equation*}
Assume that $F$ is bounded below by $F^*$.
In communication round $t$, each client $i$ performs
\begin{equation*}
\phi_{i,0}^{(t)}\coloneqq \phi^{(t)},\qquad
d_{i,-1}^{(t)}\coloneqq 0,\qquad
d_{i,e}^{(t)}\coloneqq \mu d_{i,e-1}^{(t)}+\widetilde g_{i,e}^{(t)},\qquad
\phi_{i,e+1}^{(t)}\coloneqq \phi_{i,e}^{(t)}-\eta d_{i,e}^{(t)},
\end{equation*}
for $e\in\{0,\ldots,E-1\}$, where $\mu\in [0,1)$ denotes the momentum, $E$ denotes the number of local steps, $\eta$ denotes the learning rate, and $\widetilde{g}_{i,e}^{(t)}$ denotes the stochastic gradient. 

\begin{assumption}[Smoothness]\label{assumption:smoothness}
For all $i$ and $t$, $\widetilde f_i^{(t)}$ is $L$-smooth.
\end{assumption}

\begin{assumption}[Stochastic gradient]\label{assumption:stochastic_gradient}
For all $i$, $t$, and $e$, $\widetilde g_{i,e}^{(t)}$ is unbiased with bounded variance $\sigma^2$, \ie $\mathbb{E}\bigl[\widetilde{g}_{i,e}^{(t)}\bigm|\phi_{i,e}^{(t)}\bigr]=\nabla \widetilde{f}_{i}^{(t)}(\phi_{i,e}^{(t)})$ and $\mathbb{E}\bigl[\bigl\|\widetilde{g}_{i,e}^{(t)}-\nabla \widetilde{f}_{i}^{(t)}(\phi_{i,e}^{(t)})\bigr\|^2\bigm|\phi_{i,e}^{(t)}\bigr]\leq \sigma^2$.
\end{assumption}

\begin{assumption}[Inter-client heterogeneity]\label{assumption:inter-client_heterogeneity}
There exist constants $\Gamma_0,\Gamma_1\ge 0$ such that, for all $t$, $\sum_{i=1}^m\rho_i\bigl\|\nabla\widetilde{f}_i^{(t)}(\phi^{(t)})\bigr\|^2\leq \Gamma_0+\Gamma_1\bigl\|\nabla\widetilde{f}^{(t)}(\phi^{(t)})\bigr\|^2$.
\end{assumption}

Assumptions~\ref{assumption:smoothness}--\ref{assumption:inter-client_heterogeneity} are standard in federated optimization~\cite{wang2020fednova,marfoq2021fedem}.

\begin{assumption}[Gradient bridge]\label{assumption:gradient_bridge}
There exist a constant $\Gamma_2\ge 0$ and a non-negative sequence $\{\varepsilon^{(t)}\}_{t\ge 0}$ such that, for all $t$, $\mathbb{E}\bigl[\bigl\|\nabla f^{(t)}(\phi^{(t)})\bigr\|^2\bigr]\leq \Gamma_2\mathbb{E}\bigl[\bigl\|\nabla\widetilde{f}^{(t)}(\phi^{(t)})\bigr\|^2\bigr]+\varepsilon^{(t)}$.
\end{assumption}

\begin{assumption}[Function-value bridge]\label{assumption:function-value_bridge}
There exists a non-negative sequence $\{\delta^{(t)}\}_{t\ge 0}$ such that, for all $t$, $\mathbb{E}\bigl[f^{(t)}(\phi^{(t+1)})-f^{(t)}(\phi^{(t)})\bigr]\leq \mathbb{E}\bigl[\widetilde{f}^{(t)}(\phi^{(t+1)})-\widetilde{f}^{(t)}(\phi^{(t)})\bigr] +\delta^{(t)}$.
\end{assumption}

Assumption~\ref{assumption:gradient_bridge} requires that, along the iterates, the gradient norm of the DRM objective $f^{(t)}$ is bounded by that of its tractable surrogate $\widetilde{f}^{(t)}$ up to an error $\varepsilon^{(t)}$, without imposing any alignment between their gradient directions, while Assumption~\ref{assumption:function-value_bridge} quantifies the one-step mismatch between their function-value changes. 
We further provide empirical support for these assumptions in Sec.~\ref{sec:experimental_results}.

\begin{theorem}\label{theorem:main}
Under Assumptions~\ref{assumption:smoothness}--\ref{assumption:function-value_bridge}, when $\eta = \Theta(1/\sqrt{T})$ and $\eta L a_E < \min\{1/3 , 1/\sqrt{3+8\Gamma_1}\}$, \ours under local SGD with momentum satisfies:
\begin{equation*}
\resizebox{\linewidth}{!}{
$\displaystyle\frac{1}{T}\sum_{t=0}^{T-1}\mathbb{E}\left\|\nabla_\phi F(\zeta^{(t)})\right\|^2
= 
\mathcal{O}\Biggl(
\frac{\Gamma_2(F(\zeta^{(0)})-F^*)}{a_E\sqrt{T}}
+
\frac{\Gamma_2}{a_E}\bar{\delta}_T
+
\frac{\Gamma_2L\sigma^2 s_E}{a_E\sqrt{T}}\sum_{i=1}^m \rho_i^2
+
\frac{\Gamma_2L^2(a_E^2\Gamma_0+\sigma^2 s_E)}{T}
+
\bar{\varepsilon}_T
\Biggr),$}
\end{equation*}
where 
\begin{equation*}
\mu_{e,s}\coloneqq
\frac{1-\mu^{e-s}}{1-\mu},\,\, a_E\coloneqq\sum_{e=0}^{E-1}\mu_{E,e},\,\, s_E\coloneqq\sum_{e=0}^{E-1}\mu_{E,e}^2, \,\,\bar{\delta}_T\coloneqq \frac{1}{\sqrt{T}}\sum_{t=0}^{T-1}\delta^{(t)},\,\, \bar{\varepsilon}_T\coloneqq \frac{1}{T}\sum_{t=0}^{T-1}\varepsilon^{(t)}.
\end{equation*}
\end{theorem}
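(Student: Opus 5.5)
The plan has two parts. First, transfer the problem from the intractable objective $F$ to the surrogate $\widetilde f^{(t)}$ using EM identities and the two bridge assumptions. Second, run a FedNova-style descent analysis of local SGD with momentum on $\widetilde f^{(t)}$.

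\textbf{Step 1 (EM identities).} I would first establish two facts about the profiled EM.
\begin{itemize}
\item \emph{Gradient identity.} At the current iterate, $\nabla_\phi F(\zeta^{(t)})=\nabla f^{(t)}(\phi^{(t)})$. This is the Fisher/Louis identity: $p\ell(\zeta)-Q^{(t)}(\zeta)$ equals, up to a constant, the negative of $\sum_{i,j}\mathrm{KL}$ between the posteriors of $z_{ij}$ at $\zeta^{(t)}$ and at $\zeta$. That difference is minimized at $\zeta=\zeta^{(t)}$, so its gradient vanishes there. The profiling over $G$ is harmless because $r^*_{ij}(\zeta)$ enters $p\ell$ and $p\ell^{\text c}$ identically, through the common term $\sum_{i,j}\log r^*_{ij}(\zeta)$.
\item \emph{Descent inequality.} By the same KL argument, $p\ell(\zeta)\ge Q^{(t)}(\zeta)+\text{const}$, with equality at $\zeta^{(t)}$. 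Together with block separability of $Q^{(t)}$, and the fact that the $\pi$ and supervised-block updates do not decrease $Q_3^{(t)}$ and $Q_1^{(t)}$ (inherited from \cite{marfoq2021fedem}), this gives
\[
F(\zeta^{(t+1)})-F(\zeta^{(t)})\le f^{(t)}(\phi^{(t+1)})-f^{(t)}(\phi^{(t)}).
\]
\end{itemize}
Taking expectations and applying Assumption~\ref{assumption:function-value_bridge} bounds the left side by $\mathbb{E}[\widetilde f^{(t)}(\phi^{(t+1)})-\widetilde f^{(t)}(\phi^{(t)})]+\delta^{(t)}$. Applying Assumption~\ref{assumption:gradient_bridge} gives
\[
\mathbb{E}\|\nabla_\phi F(\zeta^{(t)})\|^2\le\Gamma_2\,\mathbb{E}\|\nabla\widetilde f^{(t)}(\phi^{(t)})\|^2+\varepsilon^{(t)}.
\]

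\textbf{Step 2 (one-round descent on the surrogate).} Unrolling the momentum recursion gives
\[
\phi_{i,E}^{(t)}=\phi^{(t)}-\eta\sum_{e=0}^{E-1}\mu_{E,e}\,\widetilde g_{i,e}^{(t)},
\]
since the total coefficient of $\widetilde g_{i,e}$ after $E$ steps is $(1-\mu^{E-e})/(1-\mu)$. The server aggregates with weights $\rho_i$, so I would normalize $\widetilde f^{(t)}$ as the $\rho$-weighted combination of the $\widetilde f_i^{(t)}$; constant rescalings only change constants. Then $\phi^{(t+1)}-\phi^{(t)}=-\eta\sum_i\rho_i\sum_e\mu_{E,e}\widetilde g_{i,e}^{(t)}$. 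Applying $L$-smoothness (Assumption~\ref{assumption:smoothness}) to $\widetilde f^{(t)}$:
\begin{itemize}
\item Decompose the inner product using $\nabla\widetilde f_i^{(t)}(\phi_{i,e})=\nabla\widetilde f_i^{(t)}(\phi^{(t)})+\big(\nabla\widetilde f_i^{(t)}(\phi_{i,e})-\nabla\widetilde f_i^{(t)}(\phi^{(t)})\big)$. This yields $-\tfrac{\eta a_E}{2}\|\nabla\widetilde f^{(t)}\|^2$ plus a drift term of size $\eta L^2\sum_i\rho_i\sum_e\mu_{E,e}\mathbb{E}\|\phi_{i,e}-\phi^{(t)}\|^2$.
\item In the quadratic term, the noise is independent across clients and steps (Assumption~\ref{assumption:stochastic_gradient}). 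This gives $\eta^2L\sigma^2 s_E\sum_i\rho_i^2$.
\end{itemize}

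\textbf{Step 3 (client drift — the main obstacle).} I expect the drift bound under momentum to be the hardest step. I would bound
\[
\mathbb{E}\|\phi_{i,e}-\phi^{(t)}\|^2\lesssim\eta^2\big(\sigma^2 s_E+a_E^2\|\nabla\widetilde f_i^{(t)}(\phi^{(t)})\|^2\big)+\eta^2L^2a_E^2\max_{e'}\mathbb{E}\|\phi_{i,e'}-\phi^{(t)}\|^2.
\]
The condition $\eta La_E<1/3$ lets the self-referential term be absorbed. Averaging over clients with weights $\rho_i$ and invoking Assumption~\ref{assumption:inter-client_heterogeneity} replaces $\sum_i\rho_i\|\nabla\widetilde f_i^{(t)}\|^2$ by $\Gamma_0+\Gamma_1\|\nabla\widetilde f^{(t)}\|^2$. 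The condition $\eta La_E<1/\sqrt{3+8\Gamma_1}$ ensures the $\Gamma_1$ part consumes at most half of the $-\tfrac{\eta a_E}{2}$ descent. The result is the one-round bound
\[
\mathbb{E}[\widetilde f^{(t)}(\phi^{(t+1)})-\widetilde f^{(t)}(\phi^{(t)})]\le-\tfrac{\eta a_E}{4}\mathbb{E}\|\nabla\widetilde f^{(t)}\|^2+\eta^2L\sigma^2s_E\sum_i\rho_i^2+O\big(\eta^3L^2a_E(a_E^2\Gamma_0+\sigma^2s_E)\big).
\]

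\textbf{Step 4 (telescoping).} Combine Step 1 and Step 3, and telescope $F$ over $t=0,\dots,T-1$ using $F\ge F^*$. Then divide by $\eta a_ET/4$, multiply by $\Gamma_2$, add $\bar\varepsilon_T$, and substitute $\eta=\Theta(1/\sqrt T)$. The terms match the claimed bound as follows:
\begin{itemize}
\item $\delta^{(t)}/(\eta a_E T)$ summed over $t$ gives $\bar\delta_T/a_E$;
\item the $\eta^2$ noise term gives $L\sigma^2s_E\sum_i\rho_i^2/(a_E\sqrt T)$;
\item the $\eta^3$ drift term gives $L^2(a_E^2\Gamma_0+\sigma^2s_E)/T$.
\end{itemize}
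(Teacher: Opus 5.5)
Your proposal is correct and follows essentially the same route as the paper. The shared steps are: the ELBO argument giving both $F(\zeta^{(t+1)})-F(\zeta^{(t)})\le f^{(t)}(\phi^{(t+1)})-f^{(t)}(\phi^{(t)})$ and $\nabla_\phi F(\zeta^{(t)})=\nabla f^{(t)}(\phi^{(t)})$; momentum unrolling with weights $\mu_{E,e}$; splitting the aggregate step into $a_E\nabla\widetilde f^{(t)}(\phi^{(t)})$, drift and martingale noise; a self-referential drift bound absorbed under $\eta L a_E<1/\sqrt{3}$, with the $\Gamma_1$ part capped at $a_E\eta/4$ via $\eta L a_E<1/\sqrt{3+8\Gamma_1}$; and telescoping.

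One small sign slip: $p\ell(\zeta)-Q^{(t)}(\zeta)$ equals a constant plus $\sum_{i,j}$ of the KL divergence from the posterior of $z_{ij}$ at $\zeta^{(t)}$ to that at $\zeta$, not its negative. That positive sign is exactly why the difference is minimized at $\zeta^{(t)}$, as you then correctly use.
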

Crucially, although \ours optimizes the tractable surrogate $\widetilde{f}^{(t)}$, Theorem~\ref{theorem:main} establishes convergence toward a stationary point of the exact profiled objective $F$.
For fixed $E$, if $\sum_{t=0}^{T-1}\delta^{(t)}=\mathcal{O}(1)$ and $\sum_{t=0}^{T-1}\varepsilon^{(t)}=\mathcal{O}(\sqrt{T})$, the surrogate errors do not change the standard $\mathcal{O}(1/\sqrt{T})$ convergence rate.
The bound further characterizes the effect of momentum through the accumulation factors $a_E$ and $s_E$.
When $\mu=0$, $a_E=s_E=E$, recovering the usual local SGD scaling.
As $\mu\to 1$, $a_E=\mathcal{O}(E^2)$ and $s_E=\mathcal{O}(E^3)$.
Consequently, the initial-gap term improves from order $1/(E\sqrt{T})$ to $1/(E^2\sqrt{T})$, whereas the stochastic term grows from order $1/\sqrt{T}$ to $E/\sqrt{T}$, revealing a trade-off between faster optimization and amplified stochastic noise.
The proof is deferred to App.~\ref{app:convergence_proof}.

%% file: sections/experiment.tex
\section{Experiments on Benchmark Datasets}

We conduct experiments on three image classification benchmarks of increasing complexity: FMNIST~\cite{xiao2017fashionmnist}, CIFAR-10~\cite{krizhevsky2009cifar}, and CIFAR-100~\cite{krizhevsky2009cifar}, with dataset details deferred to App.~\ref{app:benchmark_dataset_details}.
To study each heterogeneity type in isolation and assess their joint effect, we construct semi-synthetic FL settings by superimposing controlled inter-client and intra-client heterogeneity onto the original data.
 
\subsection{Training and Evaluation Setup}

\textbf{Baselines.}
We compare \ours with representative FL baselines from seven categories: 
global-model methods (FedAvg~\cite{mcmahan2017fedavg} and FedProx~\cite{li2020fedprox}), a fine-tuning-based method (FedAvgFT~\cite{wang2019ft}), a regularization-based method (Ditto~\cite{li2021ditto}), a cluster-based method (ClusterFL~\cite{ghosh2020ifca}), a representation-learning-based method (FedBABU~\cite{oh2022fedbabu}), mixture-model-based methods (FedEM~\cite{marfoq2021fedem} and FedGMM~\cite{wu2023fedgmm}), and a routing-based method (FedDRM~\cite{wang2026feddrm}). 
Notably, only FedGMM, FedDRM, and \ours are capable of client routing by design. 

\textbf{Evaluation metrics.}
Following~\cite{wang2026feddrm}, we evaluate all methods using system and average accuracies. 
System accuracy is measured on the pooled test set across all clients, computed via \eqref{eq:client_routing} and \eqref{eq:local_prediction} for routing-capable methods, and via majority voting otherwise. 
Average accuracy is the mean of local test accuracies weighted by client sample fractions.

\textbf{Model architecture.}
We instantiate the classification encoder $g_{\theta_c}$ with ResNet~\cite{he2016resnet} and the routing encoder $h_{\nu_c}$ with a lightweight CNN~\cite{lecun1998cnn}.
For a fair comparison, all methods use the same classification encoder architecture, and routing-capable methods use the same routing encoder architecture. 
Notably, the model capacity of cluster- and mixture-model-based methods scales linearly with their number of components. 
For \ours, we report two variants to separate algorithmic gains from increased model capacity.
In \ours~($1\times$), the encoders are shared across components, \ie $g_{\theta_c}\equiv g_{\theta}$ and $h_{\nu_c}\equiv h_{\nu}$ for all $c$.
In \ours~($C\times$), each component has its own encoders $g_{\theta_c}$ and $h_{\nu_c}$.
 
\textbf{Dual heterogeneity settings.} 
Since benchmark datasets do not inherently exhibit statistical heterogeneity, we explicitly construct dual heterogeneity based on the mixture representation $P_i^{X,Y}=\sum_c \pi_{ic}P_{ic}^{X,Y}$, following common practice~\cite{tan2023is,wu2023fedgmm,wang2026feddrm}. 
Specifically, we first partition the full dataset across 8 clients via class-wise Dirichlet partitioning~\cite{yurochkin2019bayesian} with concentration parameter $\alpha_{\text{inter}}=1.0$, inducing label shift and unequal client dataset sizes. 
Within each client, we further divide the local dataset into 2 latent components, with the mixing weights $\pi_i$ sampled from a Dirichlet distribution with $\alpha_{\text{intra}}=2.0$. 
To induce component-wise covariate shift in $P_{ic}^{X}$, we apply transformations at two levels: client-level transformations, which introduce inter-client heterogeneity through combinations of color shifts and spatial intensity biases~\cite{wang2026feddrm}; and component-level transformations, which introduce intra-client heterogeneity through color-channel inversion~\cite{wu2023fedgmm}. 
In addition, for each client-component pair, we randomly generate a label permutation~\cite{wu2023fedgmm} to induce component-wise concept shift in $P_{ic}^{Y|X}$. 
Since the type and degree of each heterogeneity are specified by construction, the ground-truth data-generating mechanism is fully known, making these benchmarks well-suited for controlled comparisons. 
See App.~\ref{app:visualization_of_dual_heterogeneity} for illustrative visualizations.

\textbf{Training details.} 
We employ local SGD with momentum as the \texttt{LocalSolver}.
For fine-tuning-based methods, we additionally perform one epoch of local fine-tuning before evaluation. 
For cluster- and mixture-model-based methods, we set the number of components $C$ to 3.
Additional implementation and hyperparameter details are provided in App.~\ref{app:benchmark_training_details}.
 
\subsection{Experimental Results}\label{sec:experimental_results}

\begin{table}[!ht]
\centering
\caption{\textbf{System and average accuracies on benchmark datasets.} Results are reported as mean $\pm$ standard deviation over 3 random seeds. Superscripts (1), (2), and (3) denote the first-, second-, and third-best results, respectively. $1\times$ and $C\times$ denote the model capacity.}
\label{tab:main_results_benchmark}
\resizebox{0.9\textwidth}{!}{
\begin{tabular}{lcccccc}
\toprule
\multirow{2}{*}{Method} & \multicolumn{3}{c}{System Accuracy} & \multicolumn{3}{c}{Average Accuracy} \\
\cmidrule(lr){2-4} \cmidrule(lr){5-7}
                        & FMNIST & CIFAR-10 & CIFAR-100 & FMNIST & CIFAR-10 & CIFAR-100 \\
\midrule
\rowcolor{blue!10}
FedAvg ($1\times$)      & $39.15\pm1.31$ & $35.02\pm0.32$ & $18.17\pm0.53$ & $39.15\pm1.31$ & $35.02\pm0.32$ & $18.17\pm0.53$ \\
\rowcolor{blue!10}
FedProx ($1\times$)     & $39.14\pm1.32$ & $34.94\pm0.36$ & $18.15\pm0.49$ & $39.14\pm1.32$ & $34.94\pm0.36$ & $18.15\pm0.49$ \\
\rowcolor{pink!10}
FedAvgFT ($1\times$)    & $35.42\pm1.90$ & $32.71\pm0.22$ & $16.29\pm0.69$ & $50.55\pm1.04$ & $41.40\pm0.91$ & $23.51\pm0.54$ \\
\rowcolor{pink!10}
Ditto ($1\times$)       & $34.30\pm2.42$ & $33.12\pm0.45$ & $16.02\pm0.69$ & $54.87\pm1.18$ & $43.91\pm0.26$ & $25.06\pm0.54$ \\
\rowcolor{pink!10}
ClusterFL ($C\times$)   & $31.36\pm2.20$ & $28.38\pm2.48$ & $14.68\pm1.59$ & $55.77\pm2.32\mathrlap{^{\scriptscriptstyle (3)}}$ & $45.33\pm1.20$ & $27.31\pm1.26$ \\
\rowcolor{pink!10}
FedBABU ($1\times$)     & $34.27\pm1.19$ & $33.39\pm0.55$ & $16.64\pm0.25$ & $38.36\pm1.76$ & $36.90\pm0.47$ & $19.09\pm0.62$ \\
\rowcolor{pink!10}
FedEM ($C\times$)       & $29.73\pm2.68$ & $26.71\pm2.07$ & $13.69\pm0.62$ & $38.79\pm0.19$ & $38.50\pm0.42$ & $19.98\pm0.44$ \\
\rowcolor{green!10}
FedGMM ($C\times$)      & $31.98\pm1.99$ & $25.36\pm0.48$ & $11.97\pm0.96$ & $38.34\pm0.58$ & $38.86\pm0.42$ & $20.70\pm0.25$ \\
\rowcolor{green!10}
FedDRM ($1\times$)      & $53.68\pm1.01\mathrlap{^{\scriptscriptstyle (3)}}$ & $47.47\pm0.09\mathrlap{^{\scriptscriptstyle (3)}}$ & $27.15\pm0.37\mathrlap{^{\scriptscriptstyle (3)}}$ & $53.69\pm1.01$ & $47.62\pm0.11\mathrlap{^{\scriptscriptstyle (3)}}$ & $27.55\pm0.37\mathrlap{^{\scriptscriptstyle (3)}}$ \\
\midrule
\ours ($1\times$)       & $\mathbf{58.82\pm0.61}\mathrlap{^{\scriptscriptstyle (1)}}$ & $\mathbf{48.39\pm0.21}\mathrlap{^{\scriptscriptstyle (1)}}$ & $\mathbf{28.33\pm0.37}\mathrlap{^{\scriptscriptstyle (1)}}$ & $\mathbf{58.86\pm0.62}\mathrlap{^{\scriptscriptstyle (1)}}$ & $\mathbf{48.59\pm0.22}\mathrlap{^{\scriptscriptstyle (1)}}$ & $\mathbf{28.84\pm0.40}\mathrlap{^{\scriptscriptstyle (1)}}$ \\
\ours ($C\times$)       & $58.23\pm0.17\mathrlap{^{\scriptscriptstyle (2)}}$ & $48.03\pm0.58\mathrlap{^{\scriptscriptstyle (2)}}$ & $27.54\pm0.45\mathrlap{^{\scriptscriptstyle (2)}}$ & $58.32\pm0.18\mathrlap{^{\scriptscriptstyle (2)}}$ & $48.32\pm0.61\mathrlap{^{\scriptscriptstyle (2)}}$ & $28.07\pm0.46\mathrlap{^{\scriptscriptstyle (2)}}$ \\
\bottomrule
\end{tabular}
}
\end{table}
\textbf{Main results.}
Tab.~\ref{tab:main_results_benchmark} yields four main observations. 
First, \ours consistently outperforms all baselines in both metrics across all datasets, regardless of model capacity. 
Second, routing-free personalization methods (red background) trade system accuracy for average accuracy compared with global-model methods, revealing severe model drift.
The substantial recovery in system accuracy achieved by FedDRM and \ours further underscores the necessity of client routing for system-level performance.
Third, the poor performance of mixture-model-based methods indicates that personalizing only the mixing weights is insufficient to handle severe concept shift without component-level personalization.
Fourth, \ours~($1\times$) slightly outperforms \ours~($C\times$), suggesting that under fixed data budgets on these relatively simple tasks, the benefit of increased model capacity does not outweigh the sample dilution from separating encoders per component.
 
\textbf{Impact of dual heterogeneity intensity.}
We further evaluate the robustness of \ours under different levels of inter-client and intra-client heterogeneity on FMNIST.
To isolate their effects, we vary the Dirichlet concentration parameter $\alpha_{\text{inter}}\in\{0.5,1.0,2.0\}$ while fixing $\alpha_{\text{intra}}=2.0$, and vary $\alpha_{\text{intra}}\in\{0.5,1.0,2.0\}$ while fixing $\alpha_{\text{inter}}=1.0$.
As shown in Tab.~\ref{tab:dual_heterogeneity_results}, \ours consistently achieves the best performance across all settings, demonstrating its strong adaptability to dual heterogeneity.

\begin{table*}[!htbp]
\centering
\caption{\textbf{System and average accuracies under varying dual heterogeneity intensities.}}
\label{tab:dual_heterogeneity_results}
\resizebox{0.9\textwidth}{!}{
\begin{tabular}{lcccccccccccc}
\toprule
\multirow{3}{*}{Method} & \multicolumn{6}{c}{Inter-Client Heterogeneity ($\alpha_{\text{inter}}$)} & \multicolumn{6}{c}{Intra-Client Heterogeneity ($\alpha_{\text{intra}}$)} \\
\cmidrule(lr){2-7} \cmidrule(lr){8-13}
 & \multicolumn{3}{c}{System Accuracy} & \multicolumn{3}{c}{Average Accuracy} & \multicolumn{3}{c}{System Accuracy} & \multicolumn{3}{c}{Average Accuracy} \\
\cmidrule(lr){2-4} \cmidrule(lr){5-7} \cmidrule(lr){8-10} \cmidrule(lr){11-13}
 & 0.5 & 1.0 & 2.0 & 0.5 & 1.0 & 2.0 & 0.5 & 1.0 & 2.0 & 0.5 & 1.0 & 2.0 \\
\midrule
\rowcolor{blue!10}
FedAvg ($1\times$)     & $49.85$ & $39.15$ & $31.37$ & $49.85$ & $39.15$ & $31.37$ & $42.15$ & $37.59$ & $39.15$ & $42.15$ & $37.59$ & $39.15$ \\
\rowcolor{blue!10}
FedProx ($1\times$)    & $49.88$ & $39.14$ & $31.34$ & $49.88$ & $39.14$ & $31.34$ & $42.10$ & $37.58$ & $39.14$ & $42.10$ & $37.58$ & $39.14$ \\
\rowcolor{pink!10}
FedAvgFT ($1\times$)   & $45.87$ & $35.42$ & $27.67$ & $62.45$ & $50.55$ & $45.01$ & $36.72$ & $33.42$ & $35.42$ & $54.11$ & $50.99$ & $50.55$ \\
\rowcolor{pink!10}
Ditto ($1\times$)      & $45.28$ & $34.30$ & $25.56$ & $65.87$ & $54.87$ & $49.36$ & $34.26$ & $31.42$ & $34.30$ & $58.22$ & $55.44$ & $54.87$ \\
\rowcolor{pink!10}
ClusterFL ($C\times$)  & $40.77$ & $31.36$ & $23.96$ & $68.80$ & $55.77\mathrlap{^{\scriptscriptstyle (3)}}$ & $52.41$ & $34.79$ & $31.22$ & $31.36$ & $61.48\mathrlap{^{\scriptscriptstyle (3)}}$ & $57.66\mathrlap{^{\scriptscriptstyle (3)}}$ & $55.77\mathrlap{^{\scriptscriptstyle (3)}}$ \\
\rowcolor{pink!10}
FedBABU ($1\times$)    & $42.93$ & $34.27$ & $27.82$ & $50.14$ & $38.36$ & $31.45$ & $37.16$ & $35.09$ & $34.27$ & $40.70$ & $39.21$ & $38.36$ \\
\rowcolor{pink!10}
FedEM ($C\times$)      & $37.08$ & $29.73$ & $25.70$ & $49.68$ & $38.79$ & $31.57$ & $33.43$ & $28.88$ & $29.73$ & $41.78$ & $39.47$ & $38.79$ \\
\rowcolor{green!10}
FedGMM ($C\times$)     & $34.33$ & $31.98$ & $24.57$ & $49.22$ & $38.34$ & $31.83$ & $28.92$ & $27.90$ & $31.98$ & $44.08$ & $38.59$ & $38.34$ \\
\rowcolor{green!10}
FedDRM ($1\times$)     & $69.03\mathrlap{^{\scriptscriptstyle (3)}}$ & $53.68\mathrlap{^{\scriptscriptstyle (3)}}$ & $52.51\mathrlap{^{\scriptscriptstyle (3)}}$ & $69.05\mathrlap{^{\scriptscriptstyle (3)}}$ & $53.69$ & $52.51\mathrlap{^{\scriptscriptstyle (3)}}$ & $60.25\mathrlap{^{\scriptscriptstyle (3)}}$ & $55.52\mathrlap{^{\scriptscriptstyle (3)}}$ & $53.68\mathrlap{^{\scriptscriptstyle (3)}}$ & $60.28$ & $55.53$ & $53.69$ \\
\midrule
\ours ($1\times$)      & $\mathbf{74.20}\mathrlap{^{\scriptscriptstyle (1)}}$ & $\mathbf{58.82}\mathrlap{^{\scriptscriptstyle (1)}}$ & $55.68\mathrlap{^{\scriptscriptstyle (2)}}$ & $\mathbf{74.24}\mathrlap{^{\scriptscriptstyle (1)}}$ & $\mathbf{58.86}\mathrlap{^{\scriptscriptstyle (1)}}$ & $55.70\mathrlap{^{\scriptscriptstyle (2)}}$ & $\mathbf{62.83}\mathrlap{^{\scriptscriptstyle (1)}}$ & $\mathbf{63.10}\mathrlap{^{\scriptscriptstyle (1)}}$ & $\mathbf{58.82}\mathrlap{^{\scriptscriptstyle (1)}}$ & $\mathbf{62.89}\mathrlap{^{\scriptscriptstyle (1)}}$ & $\mathbf{63.20}\mathrlap{^{\scriptscriptstyle (1)}}$ & $\mathbf{58.86}\mathrlap{^{\scriptscriptstyle (1)}}$ \\
\ours ($C\times$)      & $73.15\mathrlap{^{\scriptscriptstyle (2)}}$ & $58.23\mathrlap{^{\scriptscriptstyle (2)}}$ & $\mathbf{56.57}\mathrlap{^{\scriptscriptstyle (1)}}$ & $73.21\mathrlap{^{\scriptscriptstyle (2)}}$ & $58.32\mathrlap{^{\scriptscriptstyle (2)}}$ & $\mathbf{56.65}\mathrlap{^{\scriptscriptstyle (1)}}$ & $62.48\mathrlap{^{\scriptscriptstyle (2)}}$ & $61.33\mathrlap{^{\scriptscriptstyle (2)}}$ & $58.23\mathrlap{^{\scriptscriptstyle (2)}}$ & $62.58\mathrlap{^{\scriptscriptstyle (2)}}$ & $61.40\mathrlap{^{\scriptscriptstyle (2)}}$ & $58.32\mathrlap{^{\scriptscriptstyle (2)}}$ \\
\bottomrule
\end{tabular}
}
\end{table*}

\begin{figure}[!ht]
\centering
\includegraphics[width=0.95\linewidth]{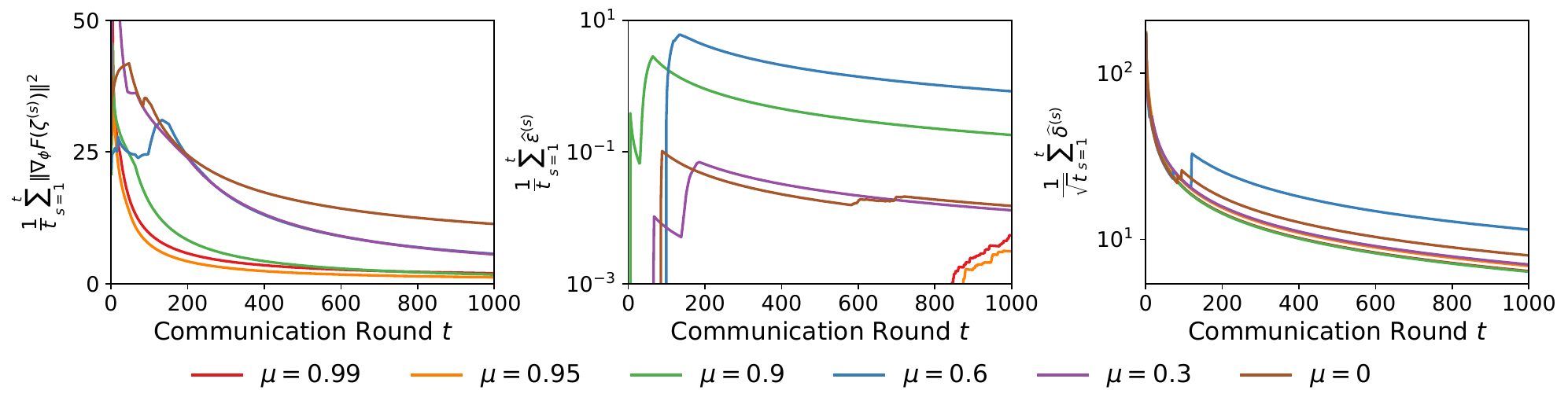}
\vspace{-0.5\baselineskip}
\caption{\textbf{Impact of momentum $\mu$.} From left to right, the panels show the running average of $\|\nabla_\phi F(\zeta^{(t)})\|^2$ and the normalized cumulative sums of $\widehat{\varepsilon}^{(t)}$ and $\widehat{\delta}^{(t)}$.}
\label{fig:impact_of_mu}
\end{figure}

\textbf{Impact of $\mu$.} 
We study the impact of momentum $\mu$ under a constant learning rate on FMNIST.
To align the empirical evaluation with our theory, we report the running average of $\|\nabla_\phi F(\zeta^{(t)})\|^2$, together with the normalized cumulative sums of the estimated bridge errors $\widehat{\delta}^{(t)}$ and $\widehat{\varepsilon}^{(t)}$.
See detailed estimation procedure in App.~\ref{app:estimation_of_the_bridge_errors}.
Since computing these quantities requires evaluating the profiled objective $F$ and the DRM objective $f^{(t)}$, both of which depend on the Lagrange multipliers $\{\lambda_{ic}\}$, we numerically solve the nonlinear system in \eqref{eq:Lagrange_multipliers_system} for $\{\lambda_{ic}\}$ at each communication round.
Fig.~\ref{fig:impact_of_mu} yields three observations.
First, the averaged gradient norm consistently decreases for all tested values of $\mu$, suggesting that \ours empirically converges toward a stationary point of the profiled objective $F$.
Second, as $\mu$ increases, the convergence first improves and then deteriorates, with the best performance achieved at $\mu=0.95$, consistent with the momentum-induced trade-off predicted by our theory.
Third, the bridge errors are empirically well controlled: their normalized cumulative estimates remain small or decrease during training.
This provides empirical support for the asymptotic behaviors $\bar{\delta}_T=o(1)$ and $\bar{\varepsilon}_T=o(1)$, thereby validating the soundness of Assumptions~\ref{assumption:gradient_bridge} and~\ref{assumption:function-value_bridge}.

\begin{wrapfigure}[10]{r}{0.28\textwidth}
\centering
\includegraphics[width=\linewidth]{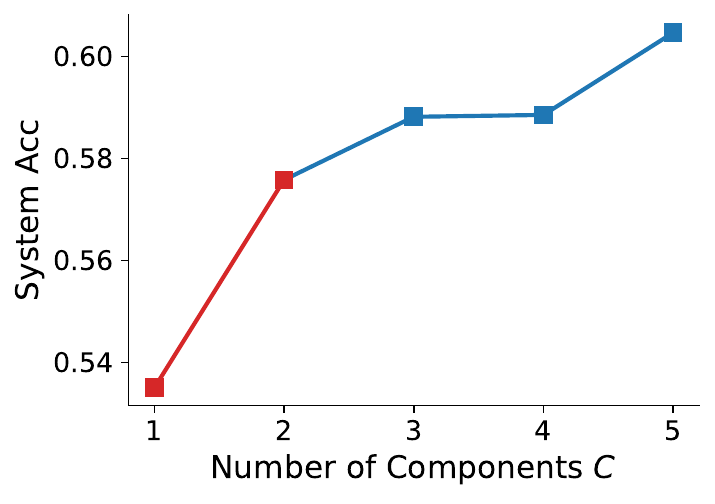}
\caption{\textbf{Impact of $C$.}}
\label{fig:C}
\end{wrapfigure}
\textbf{Impact of $C$.} 
We study the impact of the number of components $C$ on FMNIST. 
To largely isolate the effect of varying $C$ from model capacity changes, we employ a shared encoder across all components so that increasing $C$ only adds lightweight component-specific heads. 
As shown in Fig.~\ref{fig:C}, increasing $C$ from 1 to 2 yields the largest gain, supporting the benefit of mixture modeling under dual heterogeneity.
Further increasing $C$ brings smaller but consistent improvements, suggesting that mild over-specification can provide a more flexible approximation to the predictive distribution, consistent with theoretical results on over-specified mixture-of-experts models~\cite{ho2022moe,nguyen2023moe}.

\section{Experiment on Real Medical Dataset}
We complement the controlled experiments on benchmark datasets with a case study on Fed-ISIC2019~\cite{du_terrail2022flamby}, a real-world medical dataset where dual heterogeneity arises naturally from varying imaging protocols, skewed class proportions, and diverse pathological features.
Fed-ISIC2019 comprises 23,247 RGB dermoscopic images collected from 6 clinical centers for an 8-class skin-lesion classification task, naturally forming a 6-client FL system where each client corresponds to one clinical center. 
Since no heterogeneity is artificially imposed, this natural setting serves two purposes: validating the \emph{necessity} of jointly modeling dual heterogeneity in real-world FL applications and verifying the \emph{effectiveness} of our semiparametric model against parametric alternatives. The training details are provided in App.~\ref{app:isic2019_training_details}.

\begin{figure}[!ht]
\centering
\includegraphics[width=0.9\linewidth]{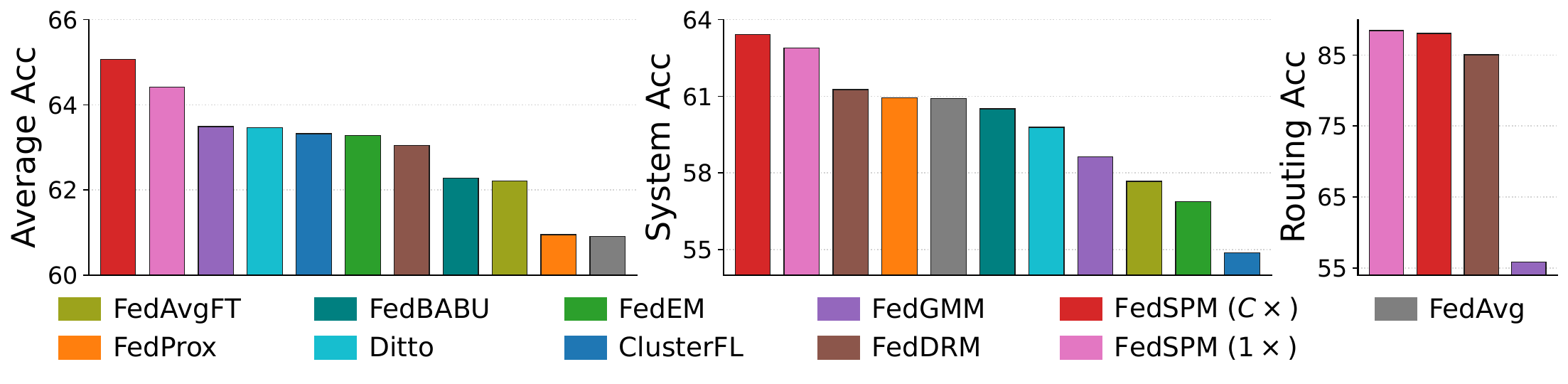}
\vspace{-0.5\baselineskip}
\caption{\textbf{Average, system, and routing accuracies on Fed-ISIC2019.}}
\label{fig:isic2019_results}
\end{figure}

Fig.~\ref{fig:isic2019_results} yields four main observations. 
First, \ours achieves the best performance across all three metrics, showing its practical feasibility in real-world medical scenarios.
Second, mixture-model-based methods generally outperform methods that do not explicitly model intra-client heterogeneity, confirming the presence of intra-client heterogeneity in real-world medical data and validating the necessity of jointly modeling dual heterogeneity.
Third, \ours substantially improves routing accuracy over FedGMM, highlighting the limitation of parametric GMMs and validating the effectiveness of our semiparametric model. 
Finally, in contrast to the benchmark results, \ours~($C\times$) outperforms \ours~($1\times$), suggesting that on this medically complex dataset, the benefit of expanded model capacity for capturing diverse pathological features outweighs the sample dilution effect.

%% file: sections/conclusion.tex
\section{Conclusion}

This paper presents \ours, a semiparametric mixture framework that extends the routing-prediction FL paradigm from inter-client heterogeneity to dual heterogeneity.
By combining DRM-based EL with mixture modeling, \ours leverages latent intra-client structure to improve both routing and prediction while balancing model flexibility with effective information sharing. 
A federated EM procedure further enables practical optimization in the client-server setting.
Empirical results on controlled benchmarks and real-world medical data show that \ours outperforms existing approaches in both routing and prediction performance under complex dual heterogeneity.
Overall, this work paves the way for expertise-aware FL systems that transform dual heterogeneity from an obstacle into a source of system-level intelligence.

%% file: sections/appendix.tex
\section{Related Work on DRM and EL}\label{app:related_work}
The DRM, first introduced by~\citet{anderson1979drm}, provides a statistical framework for modeling multiple related populations by formulating their densities as ratios with respect to a shared baseline distribution.
It is highly flexible and subsumes commonly used parametric families such as the binomial, gamma, and normal distributions~\cite{kay1987transformations}. 
As a semiparametric model, the DRM does not impose parametric assumptions on the baseline distribution, which can instead be handled nonparametrically via EL~\cite{owen1990empirical}. 
This integration gives rise to the DRM-based EL approach, which has garnered substantial attention over the past few decades~\cite{qin1993empirical,qin1999empirical,zhang2000quantile,zhang2002assessing}.
Moreover, \citet{keziou2008empirical} formally established the equivalence between the maximum DRM-based EL estimators and the corresponding dual EL estimators. 
Since the dual EL admits an analytical form and is computationally tractable, it significantly facilitates the implementation and application of DRM-based EL methods~\cite{chen2013quantile,cai2017hypothesis,zhang2026neyman}.

Recently, \citet{wang2026feddrm} first introduced DRM-based EL into FL and established a routing-prediction FL paradigm, thereby enabling explicit client routing together with personalized representation learning.
However, this paradigm treats each client as internally homogeneous.
Since intra-client heterogeneity often arises in real-world FL scenarios, ignoring such latent structure degrades both routing and prediction performance, motivating our semiparametric mixture extension.

\section{The Federated EM Algorithm}\label{app:algorithm}

\begin{algorithm*}[!h]
\caption{\ours}
\label{algo:fedspm}

\For{$t=0,1,\dots, T-1$} {
    Server broadcasts $(\theta^{(t)},\gamma^{(t)}, \xi^{(t)}, \nu^{(t)})$ to all clients
    
    \For{client $i\in [m]$ in parallel} {
        Compute $\{w_{ijc}^{(t)}\}_{j\in[n_i],c\in[C]}$ via \eqref{eq:posterior} 
        
        Send $\{\tau_{ic}=\sum_{j=1}^{n_i}w_{ijc}^{(t)}\}_{c\in[C]}$ to the server
    }

    Server broadcasts $\{\tau_{ic}\}_{i\in[m],c\in[C]}$ to all clients

    \For{client $i\in [m]$ in parallel} {
        Update $\{\pi_{ic}^{(t+1)}\}_{c\in[C]}$ via \eqref{eq:update_pi}

        $(\alpha_i^{(t+1)},\beta_i^{(t+1)}, \theta_i^{(t+1)})\leftarrow \texttt{LocalSolver}(\alpha_i^{(t)}, \beta_i^{(t)}, \theta^{(t)})$

        $(\gamma_i^{(t+1)},\xi_i^{(t+1)}, \nu_i^{(t+1)})\leftarrow \texttt{LocalSolver}(\gamma^{(t)}, \xi^{(t)}, \nu^{(t)})$

        Client $i$ sends $(\theta_i^{(t+1)}, \gamma_i^{(t+1)}, \xi_i^{(t+1)}, \nu_i^{(t+1)})$ to the server
    }

    Server updates $(\theta^{(t+1)}, \gamma^{(t+1)}, \xi^{(t+1)}, \nu^{(t+1)})\leftarrow \sum_{i=1}^m\rho_i(\theta_i^{(t+1)}, \gamma_i^{(t+1)}, \xi_i^{(t+1)}, \nu_i^{(t+1)})$
}
\end{algorithm*}

\section{Mathematical Details}

\subsection{Derivation of the Log-EL}\label{app:log-EL}
The observed data contain only $(x_{ij},y_{ij})$, while the component assignment $z_{ij}$ is latent.
Therefore, the observed-data log-EL is obtained from the augmented distribution $P_i^{X,Y,Z}$ by marginalizing over all possible values of $Z$:
\begin{equation*}
\begin{split}
\ell(\zeta, G)
&\coloneqq
\sum_{i,j}
\log
\sum_c
\underbrace{P_i^{X,Y,Z}(dx_{ij},\{y_{ij}\},\{c\})}_{\text{complete-data contribution for }Z=c}\\
&=
\sum_{i,j}
\log
\sum_c
\pi_{ic}
P_{ic}^{X,Y}(dx_{ij},\{y_{ij}\}) \\
&=
\sum_{i,j}
\log
\sum_c
\pi_{ic}
P_{ic}^{Y|X}(\{y_{ij}\}\mid x_{ij})
P_{ic}^{X}(dx_{ij}) \\
&=
\sum_{i,j}
\log
\sum_c
\pi_{ic}
P_{ic}^{Y|X}(\{y_{ij}\}\mid x_{ij})
\frac{dP_{ic}^X}{dG}(x_{ij})G(dx_{ij}) \\
&=
\sum_{i,j}
\log
\sum_c
\pi_{ic}
P_{ic}^{Y|X}(\{y_{ij}\}\mid x_{ij})
\frac{dP_{ic}^X}{dG}(x_{ij})r_{ij}\\
&=\sum_{i,j} \log r_{ij} + \sum_{i,j} \log \biggl[ \sum_{c} \pi_{ic} \exp \left( \gamma_{ic} + \xi_{ic}^\top h_{\nu_c}(x_{ij}) \right) P_{ic}^{Y|X}(\{y_{ij}\}\mid x_{ij})\biggr].
\end{split}
\end{equation*}

\subsection{Derivation of the Optimal Baseline Weights}\label{app:optimal_G}

For fixed $\zeta$, maximizing $\ell$ over $G$ reduces to maximizing $\sum_{i,j}\log r_{ij}$ subject to the constraints in \eqref{eq:constraints}. 
The corresponding Lagrangian is:
\begin{equation*}
    \resizebox{\linewidth}{!}{
    $\displaystyle \mathcal{L}\coloneqq \sum_{i,j}\log r_{ij}
-n\kappa\biggl[\sum_{i,j}r_{ij}-1\biggr]
-n\sum_{i',c}\lambda_{i'c}\sum_{i,j}
\left[
\exp\left(\gamma_{i'c}+\xi_{i'c}^{\top}h_{\nu_c}(x_{ij})\right)-1
\right]r_{ij}.$
    }
\end{equation*}
Setting $\partial\mathcal{L}/\partial r_{ij}=0$ gives
\begin{equation*}
r_{ij}^{-1}  
-n\kappa
-n\sum_{i',c}\lambda_{i'c}
\left[
\exp\left(\gamma_{i'c}+\xi_{i'c}^{\top}h_{\nu_c}(x_{ij})\right)-1
\right]=0.
\end{equation*}
Multiplying the above equation by $r_{ij}$, summing over $(i,j)$, and applying the constraints in \eqref{eq:constraints} gives $\kappa=1$, which yields
\begin{equation*}
r_{ij}^*(\zeta)=\frac{1}{n}
\biggl\{
1+
\sum_{i',c}\lambda_{i'c}
\left[
\exp\left(\gamma_{i'c}+\xi_{i'c}^{\top}h_{\nu_c}(x_{ij})\right)-1
\right]
\biggr\}^{-1}.
\end{equation*}
Substituting $r_{ij}^*(\zeta)$ back into \eqref{eq:constraints} gives the following nonlinear system for $\{\lambda_{ic}\}$:
\begin{equation*}
\sum_{i,j}
\frac{
\exp\left(\gamma_{i'c}+\xi_{i'c}^{\top}h_{\nu_c}(x_{ij})\right)-1
}{
1+
\sum_{i'',c'}\lambda_{i''c'}
\left[
\exp\left(\gamma_{i''c'}+\xi_{i''c'}^{\top}h_{\nu_{c'}}(x_{ij})\right)-1
\right]
}=0,
\quad \forall i'\in[m],\ c\in[C].
\end{equation*}

\subsection{Derivation of the Complete-Data Profile Log-EL}\label{app:complete-data_profile_log-EL}

Unlike the observed-data log-EL, the complete-data log-EL treats the latent component indicators $\{z_{ijc}\}$ as observed.
Thus, it uses the corresponding complete-data contribution from $P_i^{X,Y,Z}$ directly, without marginalizing over $Z$:
\begin{equation*}
\begin{aligned}
\ell^{\text{c}}(\zeta, G)
&\coloneqq
\sum_{i,j,c}z_{ijc}\log P_i^{X,Y,Z}(dx_{ij},\{y_{ij}\},\{c\})\\
&=
\sum_{i,j,c}z_{ijc}\log\Bigl[\pi_{ic}P_{ic}^{X,Y}(dx_{ij},\{y_{ij}\})\Bigr]\\
&=
\sum_{i,j,c}z_{ijc}\log\Bigl[\pi_{ic}P_{ic}^{Y|X}(\{y_{ij}\}\mid x_{ij})P_{ic}^{X}(dx_{ij})\Bigr]\\
&=
\sum_{i,j,c}z_{ijc}\log\Bigl[\pi_{ic}P_{ic}^{Y|X}(\{y_{ij}\}\mid x_{ij})\frac{dP_{ic}^X}{dG}(x_{ij})G(dx_{ij})\Bigr]\\
&=
\sum_{i,j,c}z_{ijc}\log\Bigl[\pi_{ic}\frac{dP_{ic}^X}{dG}(x_{ij})r_{ij} P_{ic}^{Y|X}(\{y_{ij}\}\mid x_{ij}) \Bigr]\\
&=\sum_{i,j}\log r_{ij}+\sum_{i,j,c}z_{ijc}\Bigl[\log P_{ic}^{Y|X}(\{y_{ij}\}\mid x_{ij})+\gamma_{ic}+\xi_{ic}^\top h_{\nu_c}(x_{ij})+\log \pi_{ic} \Bigr].
\end{aligned}
\end{equation*}
Since maximizing $\ell^{\text{c}}$ over $G$ for fixed $\zeta$ also reduces to maximizing $\sum_{i,j}\log r_{ij}$ subject to the constraints in \eqref{eq:constraints}, $\ell^{\text{c}}$ shares the same optimal baseline weights as $\ell$. 
Thus, substituting $r_{ij}^*(\zeta)$ back into $\ell^{\text{c}}(\zeta,G)$ yields the complete-data profile log-EL:
\begin{equation*}
\begin{aligned}
p\ell^{\text{c}}(\zeta)&\coloneqq \sup_G\ell^{\text{c}}(\zeta,G)\\
&=\sum_{i,j}\log r_{ij}^*(\zeta)+\sum_{i,j,c}z_{ijc}\biggl[\log P_{ic}^{Y|X}(\{y_{ij}\}\mid x_{ij})+\gamma_{ic}+\xi_{ic}^\top h_{\nu_c}(x_{ij})+\log \pi_{ic} \biggr].
\end{aligned}
\end{equation*}

\subsection{EM Algorithm Details}\label{app:EM_algorithm_details}
The EM algorithm is an iterative procedure that starts from an initial estimate $\zeta^{(0)}$. At iteration $t$, given the current parameter estimate $\zeta^{(t)}$, the algorithm alternates between the following two steps.

\textbf{\textit{E-step.}} 
The latent component assignments $\{z_{ij}\}$ are estimated by their posterior responsibilities, which serve as soft guesses computed from the observed data under the current parameter estimate:
\begin{equation*}
w_{ijc}^{(t)} \coloneqq \mathbb{E}\left[z_{ijc}\middle|\mathcal{D};\zeta^{(t)}\right] =P_i^{Z|X,Y}(\{c\}\mid x_{ij},y_{ij};\zeta^{(t)}).
\end{equation*}
The second equality follows because $z_{ijc}$ is the indicator of the event $\{Z=c\}$.
Applying Bayes' rule with prior $\pi_{ic}^{(t)}$ and likelihood $P_{ic}^{X,Y}(dx_{ij},\{y_{ij}\};\zeta^{(t)})$ gives
\begin{equation*}
w_{ijc}^{(t)}
=
\frac{
    \pi_{ic}^{(t)}P_{ic}^{X,Y}(dx_{ij},\{y_{ij}\};\zeta^{(t)})
}{
    \sum_{c'}
    \pi_{ic'}^{(t)}P_{ic'}^{X,Y}(dx_{ij},\{y_{ij}\};\zeta^{(t)})
}.
\end{equation*}
Using the factorization of $P_{ic}^{X,Y}$ into $P_{ic}^{Y|X}$ and $P_{ic}^{X}$ gives
\begin{equation*}
w_{ijc}^{(t)}
=
\frac{
    \pi_{ic}^{(t)}
    P_{ic}^{Y|X}(\{y_{ij}\}\mid x_{ij};\zeta^{(t)})
    P_{ic}^{X}(dx_{ij};\zeta^{(t)})
}{
    \sum_{c'}
    \pi_{ic'}^{(t)}
    P_{ic'}^{Y|X}(\{y_{ij}\}\mid x_{ij};\zeta^{(t)})
    P_{ic'}^{X}(dx_{ij};\zeta^{(t)})
}.
\end{equation*}
Under the EL representation of $G$, $P_{ic}^{X}(dx_{ij})=dP_{ic}^{X}/dG(x_{ij})r_{ij}$.
Therefore,
\begin{equation*}
w_{ijc}^{(t)}
=
\frac{
    \pi_{ic}^{(t)}
    P_{ic}^{Y|X}(\{y_{ij}\}\mid x_{ij};\zeta^{(t)})
    \frac{dP_{ic}^{X}}{dG}(x_{ij};\zeta^{(t)})r_{ij}
}{
    \sum_{c'}
    \pi_{ic'}^{(t)}
    P_{ic'}^{Y|X}(\{y_{ij}\}\mid x_{ij};\zeta^{(t)})
    \frac{dP_{ic'}^{X}}{dG}(x_{ij};\zeta^{(t)})r_{ij}
}.
\end{equation*}
Finally, substituting the DRM density ratio and canceling the common factor $r_{ij}$ yields
\begin{equation*}
w_{ijc}^{(t)}
=
\frac{
P_{ic}^{Y|X}(\{y_{ij}\}\mid x_{ij};\zeta^{(t)})
\exp\left(\gamma_{ic}^{(t)}+(\xi_{ic}^{(t)})^\top h_{\nu_c^{(t)}}(x_{ij})\right)
\pi_{ic}^{(t)}
}{
\sum_{c'}
P_{ic'}^{Y|X}(\{y_{ij}\}\mid x_{ij};\zeta^{(t)})
\exp\left(\gamma_{ic'}^{(t)}+(\xi_{ic'}^{(t)})^\top h_{\nu_{c'}^{(t)}}(x_{ij})\right)
\pi_{ic'}^{(t)}
}.
\end{equation*}
Then, the resulting profile $Q$-function is the conditional expectation of the complete-data profile log-EL with respect to $\{z_{ij}\}$:
\begin{equation*}
    \begin{aligned}
Q^{(t)}(\zeta)
&\coloneqq
\mathbb{E}\left[p\ell^{\text{c}}(\zeta)\middle|\mathcal{D};\zeta^{(t)}\right] \\
&=
Q_1^{(t)}(\alpha,\beta,\theta)
+
Q_2^{(t)}(\gamma,\xi,\nu)
+
Q_3^{(t)}(\pi),
\end{aligned}
\end{equation*}
where
\begin{equation*}
    \begin{aligned}
Q_1^{(t)}(\alpha,\beta,\theta) \coloneqq& \sum_{i,j,c}w_{ijc}^{(t)}\log P_{ic}^{Y|X}(\{y_{ij}\}\mid x_{ij}),\\
Q_2^{(t)}(\gamma, \xi, \nu) \coloneqq& \sum_{i,j,c}w_{ijc}^{(t)} \left[\gamma_{ic} + \xi_{ic}^\top h_{\nu_c}(x_{ij})\right]\\ &- \sum_{i,j}\log \biggl(1 + \sum_{i',c} \lambda_{i'c} \left[\exp\left(\gamma_{i'c} + \xi_{i'c}^\top h_{\nu_c}(x_{ij})\right) - 1\right]\biggr),\\
Q_3^{(t)}(\pi) \coloneqq& \sum_{i,j,c}w_{ijc}^{(t)}\log\pi_{ic}.
\end{aligned}
\end{equation*}

\textbf{\textit{M-step.}}
Instead of directly maximizing the profile log-EL, the M-step maximizes $Q^{(t)}$ with respect to $\zeta$, \ie 
\begin{equation*}
    \zeta^{(t+1)}=\arg\max Q^{(t)}(\zeta).
\end{equation*}
Here we focus on the analytic update of the mixing weights $\pi$, which solves
\begin{equation*}
    \begin{aligned}
\max_{\pi}\quad
&\sum_{ijc}
w_{ijc}^{(t)}\log \pi_{ic} \\
\mathrm{s.t.}\quad
&\sum_{c} \pi_{ic}=1,\qquad i\in [m],\\
&\pi_{ic}\ge 0,\qquad i\in [m],\quad c\in [C].
\end{aligned}
\end{equation*}
Using the method of Lagrange multipliers, the Lagrangian is
\begin{equation*}
    \mathcal{L}_\pi
\coloneqq
\sum_{ijc}
w_{ijc}^{(t)}\log \pi_{ic}
+
\sum_{i}
\iota_i
\left(\sum_{c}\pi_{ic}-1\right).
\end{equation*}
At the optimum, we require
\begin{equation*}
    \frac{\partial\mathcal{L}_\pi}{\partial \pi_{ic}}=0,\quad\frac{\partial\mathcal{L}_\pi}{\partial \iota_i}=0, \quad i\in[m],\quad c\in[C].
\end{equation*}
Solving these equations gives
\begin{equation*}
    \pi_{ic}^{(t+1)}
=
\frac{1}{n_i}
\sum_{j=1}^{n_i}w_{ijc}^{(t)}.
\end{equation*}

\subsection{Derivation of the Analytical Form of the Lagrange Multipliers}\label{app:analytical_form_of_lagrange_multipliers}

We derive the analytical form of $\{\lambda_{ic}\}$ from the first-order optimality condition of $Q_2^{(t)}$. 
For each $i_0\in [m]$ and $c_0\in [C]$, taking the derivative with respect to $\gamma_{i_0,c_0}$ gives
\begin{equation*}
    \frac{\partial Q_2^{(t)}}{\partial \gamma_{i_0,c_0}} =\sum_{j=1}^{n_{i_0}}w_{i_0,j,c_0}^{(t)} -\lambda_{i_0,c_0}\sum_{i,j} \frac{ \exp\left(\gamma_{i_0,c_0}+\xi_{i_0,c_0}^{\top}h_{\nu_{c_0}}(x_{ij})\right) }{ 1+ \sum_{i',c}\lambda_{i'c} \left[ \exp\left(\gamma_{i'c}+\xi_{i'c}^{\top}h_{\nu_c}(x_{ij})\right)-1 \right] },
\end{equation*}
where the term involving $\partial\lambda_{i_0,c_0}/\partial\gamma_{i_0,c_0}$ vanishes due to \eqref{eq:Lagrange_multipliers_system}. 
By \eqref{eq:constraints} and \eqref{eq:optimal_G}, we further have
\begin{equation*}
    \sum_{i,j}
\frac{
\exp\left(\gamma_{i_0,c_0}+\xi_{i_0,c_0}^{\top}h_{\nu_{c_0}}(x_{ij})\right)
}{
1+ \sum_{i',c}\lambda_{i'c} \left[ \exp\left(\gamma_{i'c}+\xi_{i'c}^{\top}h_{\nu_c}(x_{ij})\right)-1 \right] 
} 
=n\sum_{i,j}
\exp\left(\gamma_{i_0,c_0}+\xi_{i_0,c_0}^{\top}h_{\nu_{c_0}}(x_{ij})\right)r_{ij}^*
=n.
\end{equation*}
Therefore,
\begin{equation*}
    \frac{\partial Q_2^{(t)}}{\partial \gamma_{i_0,c_0}} =\sum_{j=1}^{n_{i_0}}w_{i_0,j,c_0}^{(t)} -n\lambda_{i_0,c_0}.
\end{equation*}
Setting the derivative to zero yields
\begin{equation*}
    \lambda_{i_0,c_0}=\frac{1}{n}\sum_{j=1}^{n_{i_0}}w_{i_0,j,c_0}^{(t)}.
\end{equation*}

\subsection{Convergence Proofs}\label{app:convergence_proof}

The proof proceeds in four steps. First, we unroll the local momentum recursion and rewrite the aggregated update as a single descent step driven by a weighted effective gradient. The weights are the momentum accumulation factors $\{\mu_{E,e}\}_{e=0}^{E-1}$, which make the dependence on $\mu$ explicit through $a_E$ and $s_E$. Second, we decompose this effective gradient into the averaged exact gradient, the local-update drift, and the stochastic noise. Lemmas~\ref{lemma:stochastic_noise_bound} and~\ref{lemma:gradient_drift_bound} bound the stochastic noise and drift terms, respectively. Third, Lemma~\ref{lemma:one_round_descent} establishes a one-round descent estimate for the tractable surrogate $\widetilde f^{(t)}$. Finally, we combine Assumptions~\ref{assumption:gradient_bridge} and~\ref{assumption:function-value_bridge} with the EM bridge arguments in Lemmas~\ref{lemma:EM_function-value_bridge} and~\ref{lemma:EM_gradient_bridge} to transfer the descent to the profiled log-EL and telescope over the outer rounds.

\subsubsection{Additional Notations}

For the convergence proof, define the stochastic gradient noise as
\begin{equation*}
\omega_{i,e}^{(t)}\coloneqq
\widetilde g_{i,e}^{(t)}-
\nabla\widetilde f_i^{(t)}(\phi_{i,e}^{(t)}).
\end{equation*}
Let $\mathcal F^{(-1)}$ be the trivial $\sigma$-algebra. 
For $t\ge0$, define
\begin{equation*}
\mathcal{F}^{(t)}\coloneqq
\sigma\left(
\omega_{i,e}^{(s)}:
0\le s\le t,
\ i\in[m],
\ 0\le e\le E-1
\right).
\end{equation*}
For each client $i$ and local step $e$, define
\begin{equation*}
\mathcal{F}_{i,-1}^{(t)}\coloneqq\mathcal{F}^{(t-1)},\quad
\mathcal{F}_{i,e}^{(t)}\coloneqq
\mathcal{F}^{(t-1)}\vee
\sigma\left(
\omega_{i,0}^{(t)},\dots,\omega_{i,e}^{(t)}
\right).
\end{equation*}

\subsubsection{Key Lemmas}

\begin{lemma}[EM function-value bridge]\label{lemma:EM_function-value_bridge}
At communication round $t$, for any $\zeta$, we have
\begin{equation*}
F(\zeta)-F(\zeta^{(t)})
\le
f^{(t)}(\phi)-f^{(t)}(\phi^{(t)}).
\end{equation*}
\end{lemma}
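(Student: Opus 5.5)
The plan is the classical EM minorization argument, adapted to the profile log-EL. The one structural fact needed is that $r^*_{ij}(\zeta)$ enters the observed-data and complete-data profile log-ELs identically. This holds because both are maximized over $G$ by the same weights (App.~\ref{app:complete-data_profile_log-EL}). I read the statement with $\zeta$ agreeing with $\zeta^{(t)}$ outside the DRM block $\phi$, which is the setting used in the convergence analysis. More generally, the argument gives $F(\zeta)-F(\zeta^{(t)})\le -[Q^{(t)}(\zeta)-Q^{(t)}(\zeta^{(t)})]$, and the lemma follows once the $Q_1^{(t)},Q_3^{(t)}$ contributions cancel.

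\textbf{Step 1 (rewrite $Q_2^{(t)}$ via $r^*$).} By \eqref{eq:optimal_G},
\[
\sum_{i,j}\log r^*_{ij}(\zeta) = -n\log n-\sum_{i,j}\log\Bigl(1+\sum_{i',c}\lambda_{i'c}\bigl[\exp(\gamma_{i'c}+\xi_{i'c}^\top h_{\nu_c}(x_{ij}))-1\bigr]\Bigr).
\]
Hence $Q_2^{(t)}(\phi)=\sum_{i,j,c}w^{(t)}_{ijc}[\gamma_{ic}+\xi_{ic}^\top h_{\nu_c}(x_{ij})]+\sum_{i,j}\log r^*_{ij}(\zeta)+n\log n$. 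It follows that $Q^{(t)}(\zeta)=\mathbb E[p\ell^{\text c}(\zeta)\mid\mathcal D;\zeta^{(t)}]$ up to an additive constant independent of $\zeta$, which cancels in differences.

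\textbf{Step 2 (decomposition).} For arbitrary $\zeta$, define the term $A_{ijc}(\zeta)\coloneqq\pi_{ic}\exp(\gamma_{ic}+\xi_{ic}^\top h_{\nu_c}(x_{ij}))P^{Y|X}_{ic}(\{y_{ij}\}\mid x_{ij})$. Define the posterior $w_{ijc}(\zeta)\coloneqq A_{ijc}(\zeta)/\sum_{c'}A_{ijc'}(\zeta)$, so that $w_{ijc}(\zeta^{(t)})=w^{(t)}_{ijc}$ by \eqref{eq:posterior}. For every $c$ we have $\log\sum_{c'}A_{ijc'}=\log A_{ijc}-\log w_{ijc}$. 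Averaging this identity with weights $w^{(t)}_{ijc}$ (which sum to one over $c$) and adding $\sum_{i,j}\log r^*_{ij}$ gives
\[
p\ell(\zeta)=\mathbb E\bigl[p\ell^{\text c}(\zeta)\bigm|\mathcal D;\zeta^{(t)}\bigr]+H^{(t)}(\zeta),\qquad H^{(t)}(\zeta)\coloneqq-\sum_{i,j,c}w^{(t)}_{ijc}\log w_{ijc}(\zeta).
\]

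\textbf{Step 3 (Gibbs inequality and conclusion).} The difference $H^{(t)}(\zeta)-H^{(t)}(\zeta^{(t)})=\sum_{i,j}\mathrm{KL}\bigl(w^{(t)}_{ij\cdot}\,\|\,w_{ij\cdot}(\zeta)\bigr)$ is nonnegative. Therefore $p\ell(\zeta)-p\ell(\zeta^{(t)})\ge Q^{(t)}(\zeta)-Q^{(t)}(\zeta^{(t)})$. Negating gives $F(\zeta)-F(\zeta^{(t)})\le-[Q^{(t)}(\zeta)-Q^{(t)}(\zeta^{(t)})]$. By the separable decomposition \eqref{eq:Q_decomposition}, the supervised and mixing blocks are unchanged, so their contributions cancel. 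What remains is $-[Q_2^{(t)}(\phi)-Q_2^{(t)}(\phi^{(t)})]=f^{(t)}(\phi)-f^{(t)}(\phi^{(t)})$.

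\textbf{Main obstacle.} The delicate point is Step 1: checking that the $\lambda$-dependent log term in $Q_2^{(t)}$ is exactly $-\sum\log(n r^*_{ij})$. This must hold with $\lambda=\lambda(\zeta)$ solving \eqref{eq:Lagrange_multipliers_system} at the new $\zeta$, not frozen at $\zeta^{(t)}$, so that the same profiled $r^*(\zeta)$ appears in both $p\ell$ and $Q^{(t)}$. Once that is confirmed, the standard EM identity goes through verbatim, since $r_{ij}$ cancels in the posterior.
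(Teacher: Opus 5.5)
Your argument is essentially the paper's. The paper gets the same minorization $p\ell(\zeta)-p\ell(\zeta^{(t)})\ge Q^{(t)}(\zeta)-Q^{(t)}(\zeta^{(t)})$ from Jensen's inequality, tight at $\zeta^{(t)}$. Your KL decomposition is the exact-identity form of that inequality. Your Step 1 makes explicit what the paper leaves implicit: the additive constant $n\log n$, and the fact that $\lambda=\lambda(\phi)$ is re-solved at the new parameter.

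The point to correct is how you handle the non-DRM blocks. The lemma is invoked in the proof of Theorem~\ref{theorem:main} at $\zeta=\zeta^{(t+1)}$. There, $\pi^{(t+1)}$ and the supervised block have also been updated, so freezing them is not the setting of the convergence analysis. What is needed, and what the paper uses through monotone EM/GEM ascent, is
\[
Q_1^{(t)}(\alpha,\beta,\theta)+Q_3^{(t)}(\pi)\ \ge\ Q_1^{(t)}(\alpha^{(t)},\beta^{(t)},\theta^{(t)})+Q_3^{(t)}(\pi^{(t)}),
\]
not cancellation. Your general inequality $F(\zeta)-F(\zeta^{(t)})\le -[Q^{(t)}(\zeta)-Q^{(t)}(\zeta^{(t)})]$ yields the lemma under exactly this condition. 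For $\pi$ the condition holds because \eqref{eq:update_pi} maximizes $Q_3^{(t)}$. For the supervised block it is a GEM-type requirement on \texttt{LocalSolver} plus server averaging, which the paper asserts rather than proves.

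You are right that some restriction is unavoidable. Take $\phi=\phi^{(t)}$ together with a $\pi$ or supervised block that lowers $p\ell$. Then the right side is $0$ while the left side is positive, so the literal ``for any $\zeta$'' is false.
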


\begin{proof}
We first prove the evidence lower bound (ELBO) for the profiled log-EL:
\begin{equation*}
    p\ell(\zeta)
= \sum_{i,j}\log r_{ij}^*(\zeta)+
\sum_{i,j}
\log
\biggl[
\sum_{c}
\pi_{ic}
\exp\left(
\gamma_{ic}
+
\xi_{ic}^{\top}h_{\nu_c}(x_{ij})
\right)
P_{ic}^{Y|X}(\{y_{ij}\}\mid x_{ij})
\biggr].
\end{equation*}
For each sample $(i,j)$, using $\sum_c w_{ijc}^{(t)}=1$, Jensen's inequality gives
\begin{equation*}
\begin{aligned}
p\ell(\zeta)
&=\sum_{i,j}\log r_{ij}^*(\zeta)+
\sum_{i,j}
\log
\biggl[
\sum_c
w_{ijc}^{(t)}
\frac{
\pi_{ic}
\exp\left(
\gamma_{ic}
+
\xi_{ic}^{\top}h_{\nu_c}(x_{ij})
\right)
P_{ic}^{Y|X}(\{y_{ij}\}\mid x_{ij})
}{
w_{ijc}^{(t)}
}
\biggr] \\
&\ge \sum_{i,j}\log r_{ij}^*(\zeta)+
\sum_{i,j,c}
w_{ijc}^{(t)}
\log
\frac{
\pi_{ic}
\exp\left(
\gamma_{ic}
+
\xi_{ic}^{\top}h_{\nu_c}(x_{ij})
\right)
P_{ic}^{Y|X}(\{y_{ij}\}\mid x_{ij})
}{
w_{ijc}^{(t)}
}\\
&=Q^{(t)}(\zeta)
-
\sum_{i,j,c}
w_{ijc}^{(t)}\log w_{ijc}^{(t)} .
\end{aligned}
\end{equation*}
By \eqref{eq:posterior}, when $\zeta=\zeta^{(t)}$, the Jensen inequality above becomes tight.
Hence,
\begin{equation*}
p\ell(\zeta^{(t)})
=
Q^{(t)}(\zeta^{(t)})
-
\sum_{i,j,c}
w_{ijc}^{(t)}\log w_{ijc}^{(t)} .
\end{equation*}
Subtracting this equality from the previous lower bound gives
\begin{equation*}
p\ell(\zeta)-p\ell(\zeta^{(t)})
\ge
Q^{(t)}(\zeta)-Q^{(t)}(\zeta^{(t)}).
\end{equation*}
By the monotone ascent property of the standard EM/GEM updates for $Q_1^{(t)}$ and $Q_3^{(t)}$, we have
\begin{equation*}
Q^{(t)}(\zeta)-Q^{(t)}(\zeta^{(t)})
\ge
Q_2^{(t)}(\phi)-Q_2^{(t)}(\phi^{(t)}).
\end{equation*}
Therefore, since $F=-p\ell$ and $f^{(t)}=-Q_2^{(t)}$, we obtain
\begin{equation*}
F(\zeta)-F(\zeta^{(t)})
\le
f^{(t)}(\phi)-f^{(t)}(\phi^{(t)}).
\end{equation*}
\end{proof}

\begin{lemma}[EM gradient bridge]\label{lemma:EM_gradient_bridge}
At communication round $t$, we have
\begin{equation*}
\nabla_\phi F(\zeta^{(t)})
=
\nabla f^{(t)}(\phi^{(t)}).
\end{equation*}
\end{lemma}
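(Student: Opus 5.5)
The plan is to compare $p\ell(\zeta)$ and $Q^{(t)}(\zeta)$ term by term, and to observe that they differ only in a part whose $\phi$-gradient vanishes at $\zeta^{(t)}$.

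\textbf{Step 1: split both objectives.} Write
\begin{equation*}
p\ell(\zeta)=R(\zeta)+\sum_{i,j}\log\Bigl[\sum_c u_{ijc}(\zeta)\Bigr],\qquad
Q^{(t)}(\zeta)=R(\zeta)+\sum_{i,j,c}w_{ijc}^{(t)}\log u_{ijc}(\zeta),
\end{equation*}
where the two pieces are defined as follows.
\begin{itemize}
\item The common part is $R(\zeta)\coloneqq\sum_{i,j}\log r_{ij}^*(\zeta)$. It is exactly the same function in both objectives, because the complete-data profile log-EL was obtained by substituting the same optimal weights $r^*_{ij}(\zeta)$ (App.~\ref{app:complete-data_profile_log-EL}). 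In particular, any dependence of the multipliers $\lambda$ on $\phi$ enters $R$ identically on both sides.
\item The mixture factor is $u_{ijc}(\zeta)\coloneqq\pi_{ic}\exp\bigl(\gamma_{ic}+\xi_{ic}^\top h_{\nu_c}(x_{ij})\bigr)P_{ic}^{Y|X}(\{y_{ij}\}\mid x_{ij})$.
\end{itemize}
Since $Q_1^{(t)}$ and $Q_3^{(t)}$ do not depend on $\phi$, we have $\nabla_\phi Q^{(t)}=\nabla Q_2^{(t)}$, and hence $\nabla f^{(t)}=-\nabla_\phi Q^{(t)}$. The claim therefore reduces to showing $\nabla_\phi p\ell(\zeta^{(t)})=\nabla_\phi Q^{(t)}(\zeta^{(t)})$.

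\textbf{Step 2: the $R$ terms cancel.} Because $R$ appears in both objectives, its gradients cancel in the difference. What remains is to compare the gradients of the two mixture terms.

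\textbf{Step 3: the Fisher identity for the mixture terms.} The chain rule gives
\begin{equation*}
\nabla_\phi\log\sum_c u_{ijc}=\sum_c\frac{u_{ijc}}{\sum_{c'}u_{ijc'}}\nabla_\phi\log u_{ijc}.
\end{equation*}
By \eqref{eq:posterior}, the ratio $u_{ijc}(\zeta^{(t)})/\sum_{c'}u_{ijc'}(\zeta^{(t)})$ equals $w_{ijc}^{(t)}$. So at $\zeta=\zeta^{(t)}$ this matches $\sum_c w_{ijc}^{(t)}\nabla_\phi\log u_{ijc}$, which is the gradient of the second part of $Q^{(t)}$.

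\textbf{Alternative route via Lemma~\ref{lemma:EM_function-value_bridge}.} The proof of that lemma shows that $p\ell(\zeta)-Q^{(t)}(\zeta)$ attains its minimum at $\zeta^{(t)}$. If this function is differentiable, its gradient vanishes there, which gives the same identity.

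\textbf{Main obstacle: differentiability of $r^*_{ij}(\zeta)$ in $\phi$.} The weights depend on the multipliers $\lambda(\phi)$, which are only defined implicitly through \eqref{eq:Lagrange_multipliers_system}. I would invoke the implicit function theorem, assuming the Jacobian of that system in $\lambda$ is nonsingular and $h_{\nu_c}$ is differentiable; this assumption is implicit in Assumption~\ref{assumption:smoothness}. No explicit formula for $\partial\lambda/\partial\phi$ is ever needed, since $R$ cancels in Step 2.
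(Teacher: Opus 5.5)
Your argument is correct, but your main route differs from the paper's; your ``alternative route'' is exactly what the paper does. The paper takes the ELBO from the proof of Lemma~\ref{lemma:EM_function-value_bridge} and rewrites it as $p\ell(\zeta)-Q^{(t)}(\zeta)\ge p\ell(\zeta^{(t)})-Q^{(t)}(\zeta^{(t)})$. It then concludes that the $\phi$-gradient of this difference vanishes at its minimizer $\zeta^{(t)}$. This is the standard tangent-minorizer argument for EM, and it is short once the earlier lemma is available.

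You instead compute the gradient directly. The profiled term $R(\zeta)=\sum_{i,j}\log r^*_{ij}(\zeta)$, including its implicit dependence through $\lambda(\phi)$, is literally the same function in $p\ell$ and $Q^{(t)}$. The remaining mixture terms then match at $\zeta^{(t)}$ by the chain rule and the E-step identity $u_{ijc}(\zeta^{(t)})/\sum_{c'}u_{ijc'}(\zeta^{(t)})=w^{(t)}_{ijc}$, which is Fisher's identity. Both proofs rest on the same structural fact; the paper uses it silently when it writes the Jensen lower bound as $Q^{(t)}$ minus an entropy term. Yours is a purely local computation that needs no global inequality and shows precisely where differentiability enters.

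One correction to your last paragraph. Assumption~\ref{assumption:smoothness} concerns only the surrogates $\widetilde f_i^{(t)}$, which involve the fixed statistics $\tau^{(t)}_{i'c}$ and not $\lambda(\phi)$. So it says nothing about the Jacobian of \eqref{eq:Lagrange_multipliers_system}. This is harmless for your proof. Since $R$ cancels identically, $p\ell-Q^{(t)}$ is differentiable in $\phi$ whenever each $h_{\nu_c}$ is. Hence $\nabla_\phi p\ell(\zeta^{(t)})$ exists if and only if $\nabla_\phi Q^{(t)}(\zeta^{(t)})$ does, and then the two agree. The implicit function theorem is needed only to ensure that the gradients in the statement exist at all, which the paper likewise takes for granted.
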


\begin{proof}
By the ELBO for the profiled log-EL established in the proof of Lemma~\ref{lemma:EM_function-value_bridge}, for any $\zeta$,
\begin{equation*}
p\ell(\zeta)-p\ell(\zeta^{(t)})
\ge
Q^{(t)}(\zeta)-Q^{(t)}(\zeta^{(t)}).
\end{equation*}
Equivalently,
\begin{equation*}
p\ell(\zeta)-Q^{(t)}(\zeta)
\ge
p\ell(\zeta^{(t)})-Q^{(t)}(\zeta^{(t)}).
\end{equation*}
Therefore, $p\ell(\zeta)-Q^{(t)}(\zeta)$ attains a local minimum at $\zeta^{(t)}$. 
Since the involved functions are differentiable, its $\phi$-gradient vanishes at $\zeta^{(t)}$, namely
\begin{equation*}
\nabla_\phi p\ell(\zeta^{(t)})
=
\nabla_\phi Q^{(t)}(\zeta^{(t)}).
\end{equation*}
By \eqref{eq:Q_decomposition}, only $Q_2^{(t)}$ depends on $\phi$. 
Thus,
\begin{equation*}
\nabla_\phi Q^{(t)}(\zeta^{(t)})
=
\nabla Q_2^{(t)}(\phi^{(t)}).
\end{equation*}
Since $F=-p\ell$ and $f^{(t)}=-Q_2^{(t)}$, we obtain
\begin{equation*}
\nabla_\phi F(\zeta^{(t)})
=
-\nabla_\phi p\ell(\zeta^{(t)})
=
-\nabla Q_2^{(t)}(\phi^{(t)})
=
\nabla f^{(t)}(\phi^{(t)}).
\end{equation*}
\end{proof}

\begin{lemma}[Momentum unrolling identity]\label{lemma:momentum_unrolling_identity}
At communication round $t$, we have
\begin{equation*}
\phi^{(t+1)}
=
\phi^{(t)}-\eta \widetilde g^{(t)},
\end{equation*}
where
\begin{equation*}
\widetilde g^{(t)}
\coloneqq
\sum_{i=1}^m \rho_i
\sum_{e=0}^{E-1}\mu_{E,e}\widetilde g_{i,e}^{(t)}.
\end{equation*}
\end{lemma}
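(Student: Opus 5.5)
We will prove the momentum unrolling identity in two steps. First we solve the heavy-ball recursion in closed form on each client, so that each local displacement is a fixed weighted sum of the stochastic gradients. Then we push this through the server's $\rho_i$-weighted averaging.

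\textbf{Step 1: closed form for the momentum buffer.} Fix round $t$ and client $i$. From $d_{i,-1}^{(t)}=0$ and $d_{i,e}^{(t)}=\mu d_{i,e-1}^{(t)}+\widetilde g_{i,e}^{(t)}$, a simple induction on $e$ gives
\begin{equation*}
d_{i,e}^{(t)}=\sum_{s=0}^{e}\mu^{e-s}\widetilde g_{i,s}^{(t)},\qquad e\in\{0,\dots,E-1\}.
\end{equation*}

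\textbf{Step 2: telescope the local iterates.} Starting from $\phi_{i,0}^{(t)}=\phi^{(t)}$, the local steps telescope to
\begin{equation*}
\phi_{i,E}^{(t)}-\phi^{(t)}=-\eta\sum_{e=0}^{E-1}d_{i,e}^{(t)}=-\eta\sum_{e=0}^{E-1}\sum_{s=0}^{e}\mu^{e-s}\widetilde g_{i,s}^{(t)}.
\end{equation*}
Next we exchange the order of summation, so the outer index is $s\in\{0,\dots,E-1\}$ and the inner index is $e\in\{s,\dots,E-1\}$. The coefficient of $\widetilde g_{i,s}^{(t)}$ is then a geometric sum:
\begin{equation*}
\sum_{e=s}^{E-1}\mu^{e-s}=\sum_{k=0}^{E-1-s}\mu^k=\frac{1-\mu^{E-s}}{1-\mu}=\mu_{E,s}.
\end{equation*}
When $\mu=0$ we use the convention $\mu^0=1$, which gives $\mu_{E,s}=1$, consistent with the formula. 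Relabeling $s$ as $e$ yields
\begin{equation*}
\phi_{i,E}^{(t)}=\phi^{(t)}-\eta\sum_{e=0}^{E-1}\mu_{E,e}\widetilde g_{i,e}^{(t)}.
\end{equation*}

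\textbf{Step 3: server aggregation.} Algorithm~\ref{algo:fedspm} sets $\phi^{(t+1)}=\sum_i\rho_i\phi_{i,E}^{(t)}$, where each client's returned DRM parameters are its final local iterate $\phi_{i,E}^{(t)}$. Because $\sum_i\rho_i=1$, we get
\begin{equation*}
\phi^{(t+1)}=\phi^{(t)}-\eta\sum_i\rho_i\sum_{e=0}^{E-1}\mu_{E,e}\widetilde g_{i,e}^{(t)}=\phi^{(t)}-\eta\widetilde g^{(t)},
\end{equation*}
which is the claimed identity.

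The only delicate point is the index exchange in Step 2, where the coefficient must come out as exactly $\mu_{E,e}$. Everything else is direct substitution.
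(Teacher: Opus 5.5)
Your proposal is correct and follows essentially the same argument as the paper: unroll the momentum buffer as $d_{i,e}^{(t)}=\sum_{s=0}^{e}\mu^{e-s}\widetilde g_{i,s}^{(t)}$, telescope the local steps, swap the order of summation to get the geometric coefficient $\mu_{E,s}$, and then average with $\sum_i\rho_i=1$. Your side remark about $\mu=0$ is harmless but unnecessary, since $(1-\mu^{E-s})/(1-\mu)$ already equals $1$ at $\mu=0$ because $E-s\ge 1$.
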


\begin{proof}
By unrolling the momentum recursion, for every $e=0,\dots,E-1$,
\begin{equation*}
d_{i,e}^{(t)}
=
\mu d_{i,e-1}^{(t)}+\widetilde g_{i,e}^{(t)}
=
\sum_{s=0}^{e}\mu^{e-s}\widetilde g_{i,s}^{(t)}.
\end{equation*}
Therefore,
\begin{equation*}
\begin{aligned}
\phi_{i,E}^{(t)}-\phi^{(t)}
&=
-\eta\sum_{e=0}^{E-1}d_{i,e}^{(t)} \\
&=
-\eta\sum_{e=0}^{E-1}\sum_{s=0}^{e}\mu^{e-s}\widetilde g_{i,s}^{(t)} \\
&=
-\eta\sum_{s=0}^{E-1}\left(\sum_{e=s}^{E-1}\mu^{e-s}\right)\widetilde g_{i,s}^{(t)} \\
&=
-\eta\sum_{s=0}^{E-1}\mu_{E,s}\widetilde g_{i,s}^{(t)}.
\end{aligned}
\end{equation*}
Using the aggregation rule and $\sum_{i=1}^m \rho_i=1$ gives
\begin{equation*}
\begin{aligned}
\phi^{(t+1)}-\phi^{(t)}
&=
\sum_{i=1}^m \rho_i\left(\phi_{i,E}^{(t)}-\phi^{(t)}\right) \\
&=
-\eta\sum_{i=1}^m \rho_i\sum_{e=0}^{E-1}\mu_{E,e}\widetilde g_{i,e}^{(t)} \\
&=
-\eta \widetilde g^{(t)}.
\end{aligned}
\end{equation*}
\end{proof}

\begin{lemma}[Gradient decomposition]\label{lemma:gradient_decomposition}
At communication round $t$, we have
\begin{equation*}
\widetilde g^{(t)}
=
a_E\nabla \widetilde f^{(t)}(\phi^{(t)})+D^{(t)}+\Xi^{(t)},
\end{equation*}
where
\begin{equation*}
D^{(t)}
\coloneqq
\sum_{i=1}^m \rho_i
\sum_{e=0}^{E-1}
\mu_{E,e}\Bigl[
\nabla \widetilde f_i^{(t)}(\phi_{i,e}^{(t)})-\nabla \widetilde f_i^{(t)}(\phi^{(t)})
\Bigr],
\end{equation*}
and
\begin{equation*}
\Xi^{(t)}
\coloneqq
\sum_{i=1}^m \rho_i
\sum_{e=0}^{E-1}
\mu_{E,e}\omega_{i,e}^{(t)}.
\end{equation*}
\end{lemma}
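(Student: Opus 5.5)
The plan is to start from Lemma~\ref{lemma:momentum_unrolling_identity}, which gives $\widetilde g^{(t)}=\sum_i\rho_i\sum_e\mu_{E,e}\widetilde g_{i,e}^{(t)}$. Inside each summand, add and subtract $\nabla\widetilde f_i^{(t)}(\phi_{i,e}^{(t)})$ and then $\nabla\widetilde f_i^{(t)}(\phi^{(t)})$, so that
\[
\widetilde g_{i,e}^{(t)}=\nabla\widetilde f_i^{(t)}(\phi^{(t)})+\Bigl[\nabla\widetilde f_i^{(t)}(\phi_{i,e}^{(t)})-\nabla\widetilde f_i^{(t)}(\phi^{(t)})\Bigr]+\omega_{i,e}^{(t)}.
\]
This uses only the definition $\omega_{i,e}^{(t)}=\widetilde g_{i,e}^{(t)}-\nabla\widetilde f_i^{(t)}(\phi_{i,e}^{(t)})$. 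Multiplying by $\rho_i\mu_{E,e}$ and summing over $i$ and $e$, the second and third brackets give exactly $D^{(t)}$ and $\Xi^{(t)}$ by their definitions.

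It remains to identify the first term. Since $\nabla\widetilde f_i^{(t)}(\phi^{(t)})$ does not depend on $e$, the $e$-sum factors out as $\sum_e\mu_{E,e}=a_E$. The first term is therefore $a_E\sum_i\rho_i\nabla\widetilde f_i^{(t)}(\phi^{(t)})$.

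The key check is that $\sum_i\rho_i\widetilde f_i^{(t)}$ equals $\widetilde f^{(t)}$, up to a positive rescaling. Summing $\widetilde Q_{2i}^{(t)}$ over $i$ reproduces the first term of $\widetilde Q_2^{(t)}$ exactly. For the log term, note that $\sum_{i',j',c}w_{i'j'c}^{(t)}\exp(\cdot)=\sum_{i',c}\tau_{i'c}^{(t)}\exp(\cdot)$, so $\sum_i\widetilde Q_{2i}^{(t)}=\widetilde Q_2^{(t)}$.

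This makes the $\rho_i$ weighting the main obstacle. An unweighted sum recovers $\widetilde f^{(t)}$, but a $\rho_i$-weighted sum does not unless the normalization is matched. I would resolve this by adopting the convention, implicit in the paper's aggregation and in Assumption~\ref{assumption:inter-client_heterogeneity}, that the local objectives are normalized so that $\widetilde f^{(t)}=\sum_i\rho_i\widetilde f_i^{(t)}$. Concretely, $\widetilde f_i^{(t)}$ is read as $-\widetilde Q_{2i}^{(t)}/n_i$ and $\widetilde f^{(t)}$ as $-\widetilde Q_2^{(t)}/n$, and this is what makes the weighted average the global gradient. Under that convention, linearity of the gradient gives $\sum_i\rho_i\nabla\widetilde f_i^{(t)}(\phi^{(t)})=\nabla\widetilde f^{(t)}(\phi^{(t)})$, and the decomposition follows.
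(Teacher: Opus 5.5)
Your argument is correct and follows the paper's proof: add and subtract $\nabla\widetilde f_i^{(t)}(\phi_{i,e}^{(t)})$ and $\nabla\widetilde f_i^{(t)}(\phi^{(t)})$ inside $\widetilde g_{i,e}^{(t)}$, read off $D^{(t)}$ and $\Xi^{(t)}$ from their definitions, and factor $\sum_e\mu_{E,e}=a_E$ out of the remaining term. The paper asserts the final equality $\sum_i\rho_i\nabla\widetilde f_i^{(t)}(\phi^{(t)})=\nabla\widetilde f^{(t)}(\phi^{(t)})$ without comment. You are right that under the literal definitions $\widetilde f_i^{(t)}=-\widetilde Q_{2i}^{(t)}$ and $\widetilde f^{(t)}=-\widetilde Q_2^{(t)}$ only the unweighted identity $\sum_i\widetilde Q_{2i}^{(t)}=\widetilde Q_2^{(t)}$ holds, so your per-sample normalization is exactly what makes that step valid; ideally apply it to $F$ and $f^{(t)}$ as well, so that the bridge assumptions compare quantities on the same scale.
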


\begin{proof}
By definition,
\[
\begin{aligned}
\widetilde g^{(t)}
&=
\sum_{i=1}^m \rho_i\sum_{e=0}^{E-1}\mu_{E,e}\widetilde g_{i,e}^{(t)} \\
&=
\sum_{i=1}^m \rho_i\sum_{e=0}^{E-1}\mu_{E,e}\nabla \widetilde f_i^{(t)}(\phi_{i,e}^{(t)})
+\Xi^{(t)}.
\end{aligned}
\]
Adding and subtracting $\nabla \widetilde f_i^{(t)}(\phi^{(t)})$ gives
\[
\begin{aligned}
\widetilde g^{(t)}
&=
\sum_{i=1}^m \rho_i\sum_{e=0}^{E-1}\mu_{E,e}\nabla \widetilde f_i^{(t)}(\phi^{(t)})
+D^{(t)}+\Xi^{(t)} \\
&=
\left(\sum_{e=0}^{E-1}\mu_{E,e}\right)\sum_{i=1}^m \rho_i\nabla \widetilde f_i^{(t)}(\phi^{(t)})
+D^{(t)}+\Xi^{(t)} \\
&=
a_E\nabla \widetilde f^{(t)}(\phi^{(t)})+D^{(t)}+\Xi^{(t)}.
\end{aligned}
\]
\end{proof}

\begin{lemma}[Stochastic noise bound]\label{lemma:stochastic_noise_bound}
Under Assumption~\ref{assumption:stochastic_gradient}, at communication round $t$, we have
\begin{equation*}
\mathbb{E}\left[\left\|\Xi^{(t)}\right\|^2 \,\middle|\, \mathcal F^{(t-1)}\right]
\le
\sigma^2 s_E\sum_{i=1}^m \rho_i^2.
\end{equation*}
\end{lemma}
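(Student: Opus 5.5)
We expand the squared norm of $\Xi^{(t)}=\sum_{i}\rho_i\sum_{e}\mu_{E,e}\omega_{i,e}^{(t)}$ into diagonal terms and cross terms. We then show that all cross terms vanish in conditional expectation given $\mathcal F^{(t-1)}$ by the martingale-difference structure of the noise. The diagonal terms are bounded by Assumption~\ref{assumption:stochastic_gradient}. Concretely,
\begin{equation*}
\bigl\|\Xi^{(t)}\bigr\|^2=\sum_{i,i'}\sum_{e,e'}\rho_i\rho_{i'}\mu_{E,e}\mu_{E,e'}\bigl\langle\omega_{i,e}^{(t)},\omega_{i',e'}^{(t)}\bigr\rangle .
\end{equation*}

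\textbf{Within-client cross terms} ($i=i'$, $e<e'$). The iterate $\phi_{i,e'}^{(t)}$ is determined by $\phi^{(t)}$ and the noises $\omega_{i,0}^{(t)},\dots,\omega_{i,e'-1}^{(t)}$, hence is $\mathcal F_{i,e'-1}^{(t)}$-measurable. Unbiasedness gives $\mathbb{E}[\omega_{i,e'}^{(t)}\mid\mathcal F_{i,e'-1}^{(t)}]=0$; we interpret the conditioning on $\phi_{i,e}^{(t)}$ in Assumption~\ref{assumption:stochastic_gradient} as conditioning on this past filtration, which is the standard reading. Since $\omega_{i,e}^{(t)}$ is $\mathcal F_{i,e'-1}^{(t)}$-measurable, the tower property gives $\mathbb{E}[\langle\omega_{i,e}^{(t)},\omega_{i,e'}^{(t)}\rangle\mid\mathcal F^{(t-1)}]=0$.

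\textbf{Cross-client terms} ($i\neq i'$). Here we use that clients sample independently given the broadcast state $\mathcal F^{(t-1)}$. Client $i'$'s noise sequence is then a martingale difference sequence with respect to its own filtration, even after enlarging that filtration by client $i$'s entire noise sequence. Conditioning step by step on client $i'$'s filtration enlarged by $\sigma(\omega_{i,0}^{(t)},\dots,\omega_{i,E-1}^{(t)})$, the inner product again has zero conditional mean. This is the main obstacle: it requires an explicit conditional-independence assumption across clients, which Assumption~\ref{assumption:stochastic_gradient} leaves implicit. I would state it as part of the standard federated sampling model (each client draws its own minibatches independently).

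\textbf{Diagonal terms.} By the tower property and the variance bound,
\begin{equation*}
\mathbb{E}\bigl[\|\omega_{i,e}^{(t)}\|^2\mid\mathcal F^{(t-1)}\bigr]=\mathbb{E}\Bigl[\mathbb{E}\bigl[\|\omega_{i,e}^{(t)}\|^2\mid\mathcal F_{i,e-1}^{(t)}\bigr]\Bigm|\mathcal F^{(t-1)}\Bigr]\le\sigma^2 .
\end{equation*}
Summing these with weights $\rho_i^2\mu_{E,e}^2$ gives
\begin{equation*}
\sigma^2\Bigl(\sum_{e=0}^{E-1}\mu_{E,e}^2\Bigr)\sum_{i=1}^m\rho_i^2=\sigma^2 s_E\sum_{i=1}^m\rho_i^2 ,
\end{equation*}
which is the claimed bound.
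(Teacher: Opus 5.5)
Your proposal is correct and follows the paper's proof essentially step for step: you expand $\|\Xi^{(t)}\|^2$, eliminate the within-client cross terms using the martingale-difference property with respect to $\mathcal F_{i,e-1}^{(t)}$, eliminate the cross-client terms using conditional independence of the clients' noise given $\mathcal F^{(t-1)}$, and bound the diagonal terms by $\sigma^2$ with weights $\rho_i^2\mu_{E,e}^2$, which yields $\sigma^2 s_E\sum_i\rho_i^2$. You are also right that neither the cross-client conditional independence nor the reading of the conditioning in Assumption~\ref{assumption:stochastic_gradient} as conditioning on the past filtration is stated among the assumptions; the paper relies on both implicitly in the same way you do.
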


\begin{proof}
For each client $i$, $\{\omega_{i,e}^{(t)}\}_{e=0}^{E-1}$ is a martingale difference sequence with
\begin{equation*}
\mathbb E\left[\omega_{i,e}^{(t)} \,\middle|\, \mathcal F_{i,e-1}^{(t)}\right]=0,
\qquad
\mathbb E\left[\left\|\omega_{i,e}^{(t)}\right\|^2 \,\middle|\, \mathcal F_{i,e-1}^{(t)}\right]\le \sigma^2.
\end{equation*}
Thus the within-client cross terms vanish, and
\begin{equation*}
\begin{aligned}
\mathbb E\left[
\left\|
\sum_{e=0}^{E-1}\mu_{E,e}\omega_{i,e}^{(t)}
\right\|^2
\,\middle|\, \mathcal F^{(t-1)}
\right]
&=
\sum_{e=0}^{E-1}\mu_{E,e}^2
\mathbb E\left[\left\|\omega_{i,e}^{(t)}\right\|^2 \,\middle|\, \mathcal F^{(t-1)}\right] \\
&\le
\sigma^2 s_E.
\end{aligned}
\end{equation*}
Since the accumulated noises are conditionally independent across clients given $\mathcal F^{(t-1)}$, the cross-client terms also vanish. 
Hence, 
\begin{equation*}
\begin{aligned}
\mathbb E\left[\left\|\Xi^{(t)}\right\|^2 \,\middle|\, \mathcal F^{(t-1)}\right]
&=
\sum_{i=1}^m \rho_i^2
\mathbb E\left[
\left\|
\sum_{e=0}^{E-1}\mu_{E,e}\omega_{i,e}^{(t)}
\right\|^2
\,\middle|\, \mathcal F^{(t-1)}
\right] \\
&\le
\sigma^2 s_E\sum_{i=1}^m \rho_i^2.
\end{aligned}
\end{equation*}
\end{proof}

\begin{lemma}[Gradient drift bound]\label{lemma:gradient_drift_bound}
Under Assumptions~\ref{assumption:smoothness}--\ref{assumption:inter-client_heterogeneity}, at communication round $t$, if $\eta La_E<1/\sqrt{3}$, we have
\begin{equation*}
\mathbb E\left[\left\|D^{(t)}\right\|^2 \,\middle|\, \mathcal F^{(t-1)}\right]
\le
\frac{2a_E^2L^2\eta^2}{1-3a_E^2L^2\eta^2}
\left[
a_E^2\left(\Gamma_0+\Gamma_1\left\|\nabla \widetilde f^{(t)}(\phi^{(t)})\right\|^2\right)
+s_E\sigma^2
\right].
\end{equation*}
\end{lemma}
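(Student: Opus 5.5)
The plan is to bound $D^{(t)}$ by the local displacements $\phi_{i,e}^{(t)}-\phi^{(t)}$ using $L$-smoothness, and then control these displacements through a self-referential inequality that can be solved because $3a_E^2L^2\eta^2<1$.

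\textbf{Step 1 (Jensen on $D^{(t)}$).} The weights $\rho_i\mu_{E,e}/a_E$ sum to one over $(i,e)$. Jensen's inequality and Assumption~\ref{assumption:smoothness} therefore give
\begin{equation*}
\bigl\|D^{(t)}\bigr\|^2\le a_E^2L^2\sum_{i=1}^m\rho_i A_i,\qquad A_i\coloneqq\sum_{e=0}^{E-1}\frac{\mu_{E,e}}{a_E}\bigl\|\phi_{i,e}^{(t)}-\phi^{(t)}\bigr\|^2 .
\end{equation*}
We write $\bar A_i\coloneqq\mathbb E[A_i\mid\mathcal F^{(t-1)}]$. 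It then suffices to bound $\bar A_i$ and apply Assumption~\ref{assumption:inter-client_heterogeneity}.

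\textbf{Step 2 (unrolling the local trajectory).} Repeating the computation in the proof of Lemma~\ref{lemma:momentum_unrolling_identity} over only the first $e$ steps gives
\begin{equation*}
\phi_{i,e}^{(t)}-\phi^{(t)}=-\eta\sum_{s=0}^{e-1}\mu_{e,s}\widetilde g_{i,s}^{(t)} .
\end{equation*}
We split $\widetilde g_{i,s}^{(t)}$ into three parts: the drift $\nabla\widetilde f_i^{(t)}(\phi_{i,s}^{(t)})-\nabla\widetilde f_i^{(t)}(\phi^{(t)})$, the anchor gradient $\nabla\widetilde f_i^{(t)}(\phi^{(t)})$, and the noise $\omega_{i,s}^{(t)}$. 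Young's inequality in the form $\|x+y\|^2\le 3\|x\|^2+\tfrac32\|y\|^2$ separates the drift from the sum of the anchor and noise parts.

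\emph{Anchor plus noise.} The anchor gradient is $\mathcal F^{(t-1)}$-measurable and the noise is a martingale difference sequence. The cross term therefore vanishes, and, exactly as in Lemma~\ref{lemma:stochastic_noise_bound},
\begin{equation*}
\mathbb E\Bigl[\Bigl\|\sum_{s<e}\mu_{e,s}\bigl(\nabla\widetilde f_i^{(t)}(\phi^{(t)})+\omega_{i,s}^{(t)}\bigr)\Bigr\|^2\,\Big|\,\mathcal F^{(t-1)}\Bigr]\le a_E^2\bigl\|\nabla\widetilde f_i^{(t)}(\phi^{(t)})\bigr\|^2+s_E\sigma^2 .
\end{equation*}
This uses $\mu_{e,s}\le\mu_{E,s}$, which holds because $\mu_{e,s}$ is increasing in $e$.

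\emph{Drift.} Cauchy--Schwarz with weights $\mu_{e,s}\le\mu_{E,s}$, followed by smoothness, bounds the drift part by $a_E^2L^2A_i$.

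\textbf{Step 3 (closing the recursion).} Combining the two parts, for every $e$,
\begin{equation*}
\mathbb E\bigl[\|\phi_{i,e}^{(t)}-\phi^{(t)}\|^2\mid\mathcal F^{(t-1)}\bigr]\le 3\eta^2a_E^2L^2\bar A_i+\tfrac32\eta^2\Bigl(a_E^2\bigl\|\nabla\widetilde f_i^{(t)}(\phi^{(t)})\bigr\|^2+s_E\sigma^2\Bigr).
\end{equation*}
The right-hand side does not depend on $e$, so averaging with weights $\mu_{E,e}/a_E$ gives the same bound for $\bar A_i$ itself. Since $3\eta^2a_E^2L^2<1$, we can solve for $\bar A_i$:
\begin{equation*}
\bar A_i\le\frac{\tfrac32\eta^2\bigl(a_E^2\|\nabla\widetilde f_i^{(t)}(\phi^{(t)})\|^2+s_E\sigma^2\bigr)}{1-3a_E^2L^2\eta^2}.
\end{equation*}
Summing with weights $\rho_i$, using $\sum_i\rho_i=1$ and Assumption~\ref{assumption:inter-client_heterogeneity}, and multiplying by $a_E^2L^2$ from Step 1 yields the claim. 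The constant obtained is $\tfrac32$, which is at most the stated $2$.

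\textbf{Main obstacle.} The delicate point is the dependence between the drift terms and the noise: $\phi_{i,s}^{(t)}$ depends on $\omega_{i,0}^{(t)},\dots,\omega_{i,s-1}^{(t)}$, so the drift cannot be treated as independent of the noise. This is why Step 2 applies Young's inequality to separate the drift first. Orthogonality is used only between the noise and the anchor gradient, which is $\mathcal F^{(t-1)}$-measurable. The fact that $\mathcal F^{(t-1)}$-conditioning freezes $\phi^{(t)}$ and $\nabla\widetilde f_i^{(t)}(\phi^{(t)})$ must be checked from the filtration definitions.
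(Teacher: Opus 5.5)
Your proof is correct and follows the same route as the paper. Both use Jensen plus $L$-smoothness to reduce $D^{(t)}$ to the local displacements, unroll the momentum over the first $e$ steps, split each stochastic gradient into anchor, drift and noise, and close a self-bounding inequality using $3a_E^2L^2\eta^2<1$.

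The differences are cosmetic. You close the recursion with the per-client $\mu_{E,e}/a_E$-weighted average $\bar A_i$ instead of the paper's $M_{\max}^{(t)}$. Your two-way Young split (drift versus anchor plus noise) makes explicit that orthogonality is only needed between the noise and the $\mathcal F^{(t-1)}$-measurable anchor, and it yields the slightly sharper constant $\tfrac32$ in place of $2$.
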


\begin{proof}
By Jensen's inequality and $L$-smoothness,
\begin{equation*}
\begin{aligned}
\left\|D^{(t)}\right\|^2
&\le
a_E
\sum_{i=1}^m \rho_i
\sum_{e=0}^{E-1}
\mu_{E,e}
\left\|
\nabla \widetilde f_i^{(t)}(\phi_{i,e}^{(t)})-\nabla \widetilde f_i^{(t)}(\phi^{(t)})
\right\|^2 \\
&\le
a_E L^2
\sum_{i=1}^m \rho_i
\sum_{e=0}^{E-1}
\mu_{E,e}
\left\|\phi_{i,e}^{(t)}-\phi^{(t)}\right\|^2.
\end{aligned}
\end{equation*}
Set
\begin{equation*}
M_e^{(t)}
\coloneqq
\sum_{i=1}^m \rho_i
\mathbb E\left[\left\|\phi_{i,e}^{(t)}-\phi^{(t)}\right\|^2 \,\middle|\, \mathcal F^{(t-1)}\right],
\qquad
M_{\max}^{(t)}
\coloneqq
\max_{0\le e\le E-1}M_e^{(t)}.
\end{equation*}
Then
\begin{equation*}
\mathbb E\left[\left\|D^{(t)}\right\|^2 \,\middle|\, \mathcal F^{(t-1)}\right]
\le
a_E^2 L^2 M_{\max}^{(t)}.
\end{equation*}
It remains to bound $M_{\max}^{(t)}$. For $e\ge1$,
\begin{equation*}
\phi_{i,e}^{(t)}-\phi^{(t)}
=
-\eta\sum_{s=0}^{e-1}\mu_{e,s}\widetilde g_{i,s}^{(t)}.
\end{equation*}
Using
\begin{equation*}
\widetilde g_{i,s}^{(t)}
=
\nabla \widetilde f_i^{(t)}(\phi^{(t)})
+
\Bigl[\nabla \widetilde f_i^{(t)}(\phi_{i,s}^{(t)})-\nabla \widetilde f_i^{(t)}(\phi^{(t)})\Bigr]
+
\omega_{i,s}^{(t)},
\end{equation*}
martingale cancellation, Young's inequality with equal weights, and $L$-smoothness imply
\begin{equation*}
\begin{aligned}
M_e^{(t)}
&\le
2\eta^2
\left(\sum_{s=0}^{e-1}\mu_{e,s}\right)^2
\sum_{i=1}^m \rho_i
\left\|\nabla \widetilde f_i^{(t)}(\phi^{(t)})\right\|^2 \\
&\quad+
3\eta^2L^2
\left(\sum_{s=0}^{e-1}\mu_{e,s}\right)
\sum_{s=0}^{e-1}\mu_{e,s}M_s^{(t)}
+
2\eta^2\sigma^2
\sum_{s=0}^{e-1}\mu_{e,s}^2.
\end{aligned}
\end{equation*}
By Assumption~\ref{assumption:inter-client_heterogeneity},
\begin{equation*}
\sum_{i=1}^m \rho_i
\left\|\nabla \widetilde f_i^{(t)}(\phi^{(t)})\right\|^2
\le
\Gamma_0+\Gamma_1\left\|\nabla \widetilde f^{(t)}(\phi^{(t)})\right\|^2.
\end{equation*}
Since
\begin{equation*}
\sum_{s=0}^{e-1}\mu_{e,s}\le a_E,
\qquad
\sum_{s=0}^{e-1}\mu_{e,s}^2\le s_E,
\qquad
M_s^{(t)}\le M_{\max}^{(t)},
\end{equation*}
we get
\begin{equation*}
\begin{aligned}
M_e^{(t)}
&\le
2\eta^2 a_E^2
\left(\Gamma_0+\Gamma_1\left\|\nabla \widetilde f^{(t)}(\phi^{(t)})\right\|^2\right)
+
2\eta^2 s_E \sigma^2
+
3\eta^2L^2 a_E^2 M_{\max}^{(t)}.
\end{aligned}
\end{equation*}
Taking the maximum over $e$ gives
\begin{equation*}
M_{\max}^{(t)}
\le
\frac{2\eta^2\left[
a_E^2\left(\Gamma_0+\Gamma_1\left\|\nabla \widetilde f^{(t)}(\phi^{(t)})\right\|^2\right)
+s_E\sigma^2
\right]}{1-3a_E^2L^2\eta^2}.
\end{equation*}
Substituting this into the bound for $D^{(t)}$ proves the result.
\end{proof}

\begin{lemma}[One-round descent]\label{lemma:one_round_descent}
Under Assumptions~\ref{assumption:smoothness}--\ref{assumption:inter-client_heterogeneity}, at communication round $t$, if $\eta L a_E<\min\{1/3,1/\sqrt{3+8\Gamma_1}\}$, we have
\begin{equation*}
\begin{aligned}
\mathbb E\Bigl[
\widetilde f^{(t)}&(\phi^{(t+1)})-\widetilde f^{(t)}(\phi^{(t)})
\,\Bigm|\, \mathcal F^{(t-1)}
\Bigr] \\
&\qquad\le
-\frac{a_E\eta}{4}\left\|\nabla \widetilde f^{(t)}(\phi^{(t)})\right\|^2
+
s_E L\eta^2\sigma^2\sum_{i=1}^m \rho_i^2
+
3a_E L^2\eta^3\left(a_E^2\Gamma_0+s_E\sigma^2\right).
\end{aligned}
\end{equation*}
\end{lemma}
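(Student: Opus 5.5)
We apply the $L$-smoothness of $\widetilde f^{(t)}=\sum_i\rho_i\widetilde f_i^{(t)}$, which follows from Assumption~\ref{assumption:smoothness} because a convex combination of $L$-smooth functions is $L$-smooth, at the point $\phi^{(t)}$ along the step given by Lemma~\ref{lemma:momentum_unrolling_identity}:
\begin{equation*}
\widetilde f^{(t)}(\phi^{(t+1)})-\widetilde f^{(t)}(\phi^{(t)})\le -\eta\bigl\langle\nabla\widetilde f^{(t)}(\phi^{(t)}),\widetilde g^{(t)}\bigr\rangle+\frac{L\eta^2}{2}\bigl\|\widetilde g^{(t)}\bigr\|^2 .
\end{equation*}
We then substitute the decomposition $\widetilde g^{(t)}=a_E\nabla\widetilde f^{(t)}(\phi^{(t)})+D^{(t)}+\Xi^{(t)}$ from Lemma~\ref{lemma:gradient_decomposition} and take conditional expectations given $\mathcal F^{(t-1)}$. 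Write $\nabla\coloneqq\nabla\widetilde f^{(t)}(\phi^{(t)})$, which is $\mathcal F^{(t-1)}$-measurable.

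\textbf{Inner product.} Since each $\omega_{i,e}^{(t)}$ is a martingale difference, $\mathbb E[\langle\nabla,\Xi^{(t)}\rangle\mid\mathcal F^{(t-1)}]=0$. The inner product therefore contributes
\begin{equation*}
-a_E\eta\|\nabla\|^2-\eta\,\mathbb E\bigl[\langle\nabla,D^{(t)}\rangle\bigm|\mathcal F^{(t-1)}\bigr].
\end{equation*}
By Young's inequality, $-\eta\langle\nabla,D\rangle\le \frac{a_E\eta}{2}\|\nabla\|^2+\frac{\eta}{2a_E}\|D\|^2$.

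\textbf{Quadratic term.} Do not split $\|\widetilde g^{(t)}\|^2$ crudely, because the noise must enter only through $\sum_i\rho_i^2$. Instead write $\widetilde g^{(t)}=(a_E\nabla+D^{(t)})+\Xi^{(t)}$. The cross term $\langle D^{(t)},\Xi^{(t)}\rangle$ does not vanish, since $D^{(t)}$ depends on the noise through the local iterates. Bound it with Young's inequality:
\begin{equation*}
\|\widetilde g^{(t)}\|^2\le 2a_E^2\|\nabla\|^2+4\|D^{(t)}\|^2+2\|\Xi^{(t)}\|^2 ,
\end{equation*}
or a variant with weights $(1+\epsilon)$. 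Lemma~\ref{lemma:stochastic_noise_bound} then controls the noise term by $\sigma^2 s_E\sum_i\rho_i^2$, which gives the term $s_EL\eta^2\sigma^2\sum_i\rho_i^2$ (the factor $\frac12\cdot 2$ produces exactly that coefficient).

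\textbf{Collecting terms.} The coefficient of $\|\nabla\|^2$ before the drift contribution is $-a_E\eta+\frac{a_E\eta}{2}+La_E^2\eta^2=-a_E\eta\bigl(\frac12-La_E\eta\bigr)$. Using $\eta La_E<1/3$ makes this at most $-\frac{a_E\eta}{6}$; a more careful Young weight should give the stated $-\frac{a_E\eta}{4}$ plus slack. The drift terms total $\bigl(\frac{\eta}{2a_E}+2L\eta^2\bigr)\|D^{(t)}\|^2\le\frac{c\,\eta}{a_E}\|D^{(t)}\|^2$, again using $\eta La_E<1/3$. Insert Lemma~\ref{lemma:gradient_drift_bound}. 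Its condition $\eta La_E<1/\sqrt3$ is implied by the hypothesis, and the denominator satisfies $1-3a_E^2L^2\eta^2\ge 2/3$. The drift then contributes
\begin{equation*}
\lesssim a_EL^2\eta^3\bigl[a_E^2\Gamma_0+s_E\sigma^2\bigr]+a_E^3L^2\eta^3\Gamma_1\|\nabla\|^2 .
\end{equation*}
The first part is the stated $3a_EL^2\eta^3(a_E^2\Gamma_0+s_E\sigma^2)$. The second part is $a_E\eta\cdot(a_EL\eta)^2\Gamma_1\|\nabla\|^2$. This is where $\eta La_E<1/\sqrt{3+8\Gamma_1}$ enters: it gives $(a_EL\eta)^2\Gamma_1<\Gamma_1/(3+8\Gamma_1)<1/8$, so the $\Gamma_1$ term eats at most $a_E\eta/8$ of the remaining descent, and what is left is at least $-\frac{a_E\eta}{4}\|\nabla\|^2$.

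\textbf{Main obstacle.} The main obstacle is the constant bookkeeping. The Young weights, both in the inner-product split and in the quadratic-term split, must be chosen so that all three conditions combine to leave exactly $a_E\eta/4$ of descent and the prefactor $3$ on the drift residual. I would first try the inner-product split with weight $\frac12$ and the quadratic split $\|u+\Xi\|^2\le 2\|u\|^2+2\|\Xi\|^2$ with $u=a_E\nabla+D^{(t)}$, expanding $\|u\|^2\le 2a_E^2\|\nabla\|^2+2\|D^{(t)}\|^2$. If that gives slightly worse constants, I would tune the weights, since only the order of each term matters for Theorem~\ref{theorem:main}.
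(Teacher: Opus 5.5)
Your skeleton matches the paper's: the smoothness step, the decomposition, dropping $\langle\nabla,\Xi^{(t)}\rangle$ in conditional expectation, and the noise and drift lemmas. However, the constant bookkeeping does not close, and retuning cannot close it. The cause is structural. You apply Young's inequality to $-\eta\langle\nabla,D^{(t)}\rangle$ from the linear term and, separately, absorb the cross term $2a_E\langle\nabla,D^{(t)}\rangle$ inside $\|\widetilde g^{(t)}\|^2$. With $x\coloneqq a_EL\eta$, this leaves only $-a_E\eta(\frac12-x)\|\nabla\|^2$, which is about $-\frac{a_E\eta}{6}$ near $x=1/3$. It also leaves a drift coefficient $\frac{\eta}{2a_E}(1+4x)$. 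After Lemma~\ref{lemma:gradient_drift_bound}, the $\Gamma_1$-part of the drift is $a_E\eta\,\frac{\Gamma_1x^2(1+4x)}{1-3x^2}\|\nabla\|^2$. The hypothesis is exactly $\frac{\Gamma_1x^2}{1-3x^2}<\frac18$; you dropped both the denominator and the factor $1+4x$. For $\Gamma_1=3/4$ and $x\uparrow 1/3$ (both constraints are tight there), this term approaches $\frac{7}{24}a_E\eta$, which exceeds the $\frac{a_E\eta}{6}$ you retain. In the same limit the residual prefactor $\frac{1+4x}{1-3x^2}$ approaches $\frac72>3$. Your closing sentence also runs the arithmetic backwards: subtracting anything from $\frac{a_E\eta}{6}$ cannot leave $\frac{a_E\eta}{4}$. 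Even with optimally chosen Young weights (keeping the stated noise term), this separate treatment retains only about $\frac{a_E\eta}{12}$ of descent in that worst case. Your fallback split $\|u\|^2\le 2a_E^2\|\nabla\|^2+2\|D^{(t)}\|^2$ is worse still: it gives $-a_E\eta(\frac12-2x)$, which is nonnegative once $x\ge\frac14$. Separately, the displayed bound $\|\widetilde g^{(t)}\|^2\le 2a_E^2\|\nabla\|^2+4\|D^{(t)}\|^2+2\|\Xi^{(t)}\|^2$ is false pointwise, since the reciprocal weights sum to $\frac12+\frac14+\frac12>1$. It holds only in conditional expectation, after $\langle\nabla,\Xi^{(t)}\rangle$ drops out.

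The missing idea is to expand $\frac{L\eta^2}{2}\|a_E\nabla+D^{(t)}+\Xi^{(t)}\|^2$ completely and use only $2\langle D^{(t)},\Xi^{(t)}\rangle\le\|D^{(t)}\|^2+\|\Xi^{(t)}\|^2$. Then merge $La_E\eta^2\langle\nabla,D^{(t)}\rangle$ with the linear $-\eta\langle\nabla,D^{(t)}\rangle$ into the single term $-\eta(1-x)\langle\nabla,D^{(t)}\rangle$. Its coefficient is nonpositive because $x<1/3$; this is where that half of the hypothesis is used. Apply one Young bound, $-\langle\nabla,D\rangle\le\frac{a_E}{2}\|\nabla\|^2+\frac{1}{2a_E}\|D\|^2$, multiplied by $\eta(1-x)$. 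It contributes $\frac{a_E\eta}{2}(1-x)\|\nabla\|^2$, which exactly cancels the $\frac{La_E^2\eta^2}{2}\|\nabla\|^2$ from the quadratic term. What remains is $-\frac{a_E\eta}{2}\|\nabla\|^2+\bigl(\frac{\eta}{2a_E}+\frac{L\eta^2}{2}\bigr)\mathbb E\|D^{(t)}\|^2+L\eta^2\mathbb E\|\Xi^{(t)}\|^2$, with no step-size loss. The drift lemma then produces the factor $\frac{1+x}{1-3x^2}\le 2\le 3$ and the bound $\frac{\Gamma_1x^2(1+x)}{1-3x^2}<\frac18\cdot\frac43\le\frac14$. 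This yields exactly $-\frac{a_E\eta}{4}$ and the stated residual. You are right that only orders matter for Theorem~\ref{theorem:main}. But with your splits you would need a strictly stronger step-size condition, so your argument does not prove the lemma as stated.
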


\begin{proof}
By Lemma~\ref{lemma:momentum_unrolling_identity} and $L$-smoothness,
\begin{equation*}
\widetilde f^{(t)}(\phi^{(t+1)})-\widetilde f^{(t)}(\phi^{(t)})
\le
-\eta
\left\langle
\nabla \widetilde f^{(t)}(\phi^{(t)}),
\widetilde g^{(t)}
\right\rangle
+
\frac{L\eta^2}{2}\left\|\widetilde g^{(t)}\right\|^2.
\end{equation*}
Using Lemma~\ref{lemma:gradient_decomposition}, expanding the square, and taking conditional expectation, the terms involving $\langle\nabla \widetilde f^{(t)}(\phi^{(t)}),\Xi^{(t)}\rangle$ vanish. Since
\begin{equation*}
2\langle D^{(t)},\Xi^{(t)}\rangle
\le
\left\|D^{(t)}\right\|^2+\left\|\Xi^{(t)}\right\|^2,
\end{equation*}
we obtain
\begin{equation*}
\begin{aligned}
&\mathbb E\left[
\widetilde f^{(t)}(\phi^{(t+1)})-\widetilde f^{(t)}(\phi^{(t)})
\,\middle|\, \mathcal F^{(t-1)}
\right] \\
&\qquad\le
-a_E\eta\left\|\nabla \widetilde f^{(t)}(\phi^{(t)})\right\|^2
+
\frac{L a_E^2\eta^2}{2}\left\|\nabla \widetilde f^{(t)}(\phi^{(t)})\right\|^2 \\
&\qquad\quad+
\left(-\eta+L a_E\eta^2\right)
\mathbb E\left[
\left\langle
\nabla \widetilde f^{(t)}(\phi^{(t)}),
D^{(t)}
\right\rangle
\,\middle|\, \mathcal F^{(t-1)}
\right] \\
&\qquad\quad+
L\eta^2
\mathbb E\left[\left\|D^{(t)}\right\|^2 \,\middle|\, \mathcal F^{(t-1)}\right]
+
L\eta^2
\mathbb E\left[\left\|\Xi^{(t)}\right\|^2 \,\middle|\, \mathcal F^{(t-1)}\right].
\end{aligned}
\end{equation*}
Since $a_E L\eta<1/3$,
\begin{equation*}
-\eta+L a_E\eta^2=-\eta(1-a_E L\eta)\le0.
\end{equation*}
Using
\begin{equation*}
-\left\langle
\nabla \widetilde f^{(t)}(\phi^{(t)}),
D^{(t)}
\right\rangle
\le
\frac{a_E}{2}\left\|\nabla \widetilde f^{(t)}(\phi^{(t)})\right\|^2
+
\frac{1}{2a_E}\left\|D^{(t)}\right\|^2,
\end{equation*}
we have
\begin{equation*}
\begin{aligned}
&\mathbb E\left[
\widetilde f^{(t)}(\phi^{(t+1)})-\widetilde f^{(t)}(\phi^{(t)})
\,\middle|\, \mathcal F^{(t-1)}
\right] \\
&\qquad\le
-\frac{a_E\eta}{2}\left\|\nabla \widetilde f^{(t)}(\phi^{(t)})\right\|^2
+
\left(\frac{\eta}{2a_E}+\frac{L\eta^2}{2}\right)
\mathbb E\left[\left\|D^{(t)}\right\|^2 \,\middle|\, \mathcal F^{(t-1)}\right] \\
&\qquad\quad+
L\eta^2
\mathbb E\left[\left\|\Xi^{(t)}\right\|^2 \,\middle|\, \mathcal F^{(t-1)}\right].
\end{aligned}
\end{equation*}
By Lemmas~\ref{lemma:stochastic_noise_bound} and~\ref{lemma:gradient_drift_bound},
\begin{equation*}
\begin{aligned}
&\left(\frac{\eta}{2a_E}+\frac{L\eta^2}{2}\right)
\mathbb E\left[\left\|D^{(t)}\right\|^2 \,\middle|\, \mathcal F^{(t-1)}\right] \\
&\qquad\le
\frac{a_E L^2\eta^3(1+a_E L\eta)}{1-3a_E^2L^2\eta^2}
\left[
a_E^2\left(\Gamma_0+\Gamma_1\left\|\nabla \widetilde f^{(t)}(\phi^{(t)})\right\|^2\right)
+s_E\sigma^2
\right],
\end{aligned}
\end{equation*}
and
\begin{equation*}
L\eta^2
\mathbb E\left[\left\|\Xi^{(t)}\right\|^2 \,\middle|\, \mathcal F^{(t-1)}\right]
\le
s_E L\eta^2\sigma^2\sum_{i=1}^m \rho_i^2.
\end{equation*}
The drift contribution to the gradient coefficient equals
\begin{equation*}
a_E\eta
\frac{\Gamma_1(a_E L\eta)^2(1+a_E L\eta)}{1-3a_E^2L^2\eta^2}
\left\|\nabla \widetilde f^{(t)}(\phi^{(t)})\right\|^2.
\end{equation*}
The learning rate condition implies
\begin{equation*}
\frac{\Gamma_1(a_E L\eta)^2(1+a_E L\eta)}{1-3a_E^2L^2\eta^2}
\le
\frac14,
\qquad
\frac{1+a_E L\eta}{1-3a_E^2L^2\eta^2}
\le 3.
\end{equation*}
Combining the above inequalities proves the lemma.
\end{proof}

\subsubsection{Proof of Theorem~\ref{theorem:main}}

\begin{proof}
By Lemma~\ref{lemma:one_round_descent} and the tower property,
\begin{equation*}
\begin{aligned}
&\mathbb E\left[
\widetilde f^{(t)}(\phi^{(t+1)})-\widetilde f^{(t)}(\phi^{(t)})
\right] \\
&\qquad\qquad\le
-\frac{a_E\eta}{4}
\mathbb E\left[\left\|\nabla \widetilde f^{(t)}(\phi^{(t)})\right\|^2\right]
+
s_E L\eta^2\sigma^2\sum_{i=1}^m \rho_i^2+
3a_E L^2\eta^3\left(a_E^2\Gamma_0+s_E\sigma^2\right).
\end{aligned}
\end{equation*}
Combining the preceding inequality with Lemma~\ref{lemma:EM_function-value_bridge} and Assumption~\ref{assumption:function-value_bridge}, we obtain
\begin{equation*}
\begin{aligned}
\frac{a_E\eta}{4}
\mathbb E\left[\left\|\nabla \widetilde f^{(t)}(\phi^{(t)})\right\|^2\right]
&\le
\mathbb E\left[F(\zeta^{(t)})-F(\zeta^{(t+1)})\right]
+
\delta^{(t)} \\
&\quad+
s_E L\eta^2\sigma^2\sum_{i=1}^m \rho_i^2
+
3a_E L^2\eta^3\left(a_E^2\Gamma_0+s_E\sigma^2\right).
\end{aligned}
\end{equation*}
Summing over $t=0,\ldots,T-1$ and using $F(\zeta^{(T)})\ge F^*$,
\begin{equation*}
\begin{aligned}
\frac{a_E\eta}{4}
\sum_{t=0}^{T-1}\mathbb E\left[\left\|\nabla \widetilde f^{(t)}(\phi^{(t)})\right\|^2\right]
&\le
F(\zeta^{(0)})-F^*
+
\sum_{t=0}^{T-1}\delta^{(t)} \\
&\quad+
T s_E L\eta^2\sigma^2\sum_{i=1}^m \rho_i^2
+
3T a_E L^2\eta^3\left(a_E^2\Gamma_0+s_E\sigma^2\right).
\end{aligned}
\end{equation*}
Thus,
\begin{equation*}
\begin{aligned}
\frac{1}{T}\sum_{t=0}^{T-1}\mathbb E\left[\left\|\nabla \widetilde f^{(t)}(\phi^{(t)})\right\|^2\right]
&\le
\frac{4\left(F(\zeta^{(0)})-F^*+\sum_{t=0}^{T-1}\delta^{(t)}\right)}{a_E\eta T} \\
&\quad+
\frac{4s_E L\eta\sigma^2}{a_E}\sum_{i=1}^m \rho_i^2
+
12L^2\eta^2\left(a_E^2\Gamma_0+s_E\sigma^2\right).
\end{aligned}
\end{equation*}
By Lemma~\ref{lemma:EM_gradient_bridge} and Assumption~\ref{assumption:gradient_bridge},
\begin{equation*}
\mathbb E\left[\left\|\nabla_\phi F(\zeta^{(t)})\right\|^2\right]
\le
\Gamma_2\,
\mathbb E\left[\left\|\nabla \widetilde f^{(t)}(\phi^{(t)})\right\|^2\right]
+\varepsilon^{(t)}.
\end{equation*}
Averaging over $t$ and substituting the preceding bound gives
\begin{equation*}
\begin{aligned}
\frac{1}{T}\sum_{t=0}^{T-1}\mathbb E\left[\left\|\nabla_\phi F(\zeta^{(t)})\right\|^2\right]
&\le
\frac{4\Gamma_2\left(F(\zeta^{(0)})-F^*+\sum_{t=0}^{T-1}\delta^{(t)}\right)}{a_E\eta T} \\
&\quad+
\frac{4\Gamma_2 s_E L\eta\sigma^2}{a_E}\sum_{i=1}^m \rho_i^2 \\
&\quad+
12\Gamma_2 L^2\eta^2\left(a_E^2\Gamma_0+s_E\sigma^2\right)
+
\frac{1}{T}\sum_{t=0}^{T-1}\varepsilon^{(t)}.
\end{aligned}
\end{equation*}
Taking $\eta=\Theta(1/\sqrt T)$ yields
\begin{equation*}
\begin{aligned}
\frac{1}{T}\sum_{t=0}^{T-1}\mathbb{E}\left\|\nabla_\phi F(\zeta^{(t)})\right\|^2
= 
\mathcal{O}\Biggl(&
\frac{\Gamma_2(F(\zeta^{(0)})-F^*)}{a_E\sqrt{T}}
+
\frac{\Gamma_2}{a_E}\bar{\delta}_T
+
\frac{\Gamma_2L\sigma^2 s_E}{a_E\sqrt{T}}\sum_{i=1}^m \rho_i^2\\
&+
\frac{\Gamma_2L^2(a_E^2\Gamma_0+\sigma^2 s_E)}{T}
+
\bar{\varepsilon}_T
\Biggr),
\end{aligned}
\end{equation*}
which completes the proof.
\end{proof}

\section{Experimental Details}

\subsection{Benchmark Dataset Details}\label{app:benchmark_dataset_details}

We conduct experiments on FMNIST~\cite{xiao2017fashionmnist}, CIFAR-10~\cite{krizhevsky2009cifar}, and CIFAR-100~\cite{krizhevsky2009cifar}. FMNIST contains 70,000 $28\times 28$ grayscale images from 10 fashion classes. CIFAR-10 contains 60,000 $32\times 32$ RGB images from 10 object classes. CIFAR-100 contains 60,000 $32\times 32$ RGB images from 100 object classes, grouped into 20 superclasses. We construct three tasks of increasing complexity: (a) 10-class classification on FMNIST, (b) 10-class classification on CIFAR-10, and (c) 20-class classification using the CIFAR-100 superclasses.

\subsection{Visualization of Dual Heterogeneity}\label{app:visualization_of_dual_heterogeneity}

We provide visualizations of the dual heterogeneity settings in our benchmark experiments. 

To induce covariate shift, images are transformed at both the client and component levels. 
At the component level, component 0 retains the original image, while component 1 applies grayscale inversion for FMNIST and green-channel inversion for CIFAR-10/100. 
At the client level, the 8 clients correspond to the $2^3$ binary combinations of three orthogonal dimensions: color shift (red- vs. blue-dominant), vertical spatial bias (top- vs. bottom-brightened), and horizontal spatial bias (left- vs. right-brightened). 
As shown in Fig.~\ref{fig:transform_visualization}, these hierarchical transformations yield a visually conspicuous covariate shift across clients.

Fig.~\ref{fig:fmnist_inter} illustrates the impact of $\alpha_{\text{inter}}$ on client label distributions. 
A smaller $\alpha_{\text{inter}}$ produces increasingly skewed class proportions and uneven dataset sizes across clients, corresponding to intensified inter-client heterogeneity. 
Similarly, Fig.~\ref{fig:fmnist_intra} visualizes how $\alpha_{\text{intra}}$ modulates the intra-client mixing weights. 
While a small $\alpha_{\text{intra}}$ concentrates local data into a single dominant component, a larger $\alpha_{\text{intra}}$ yields more balanced mixtures, indicating stronger intra-client heterogeneity.

\begin{figure}[!ht]
\centering
\includegraphics[width=0.95\linewidth]{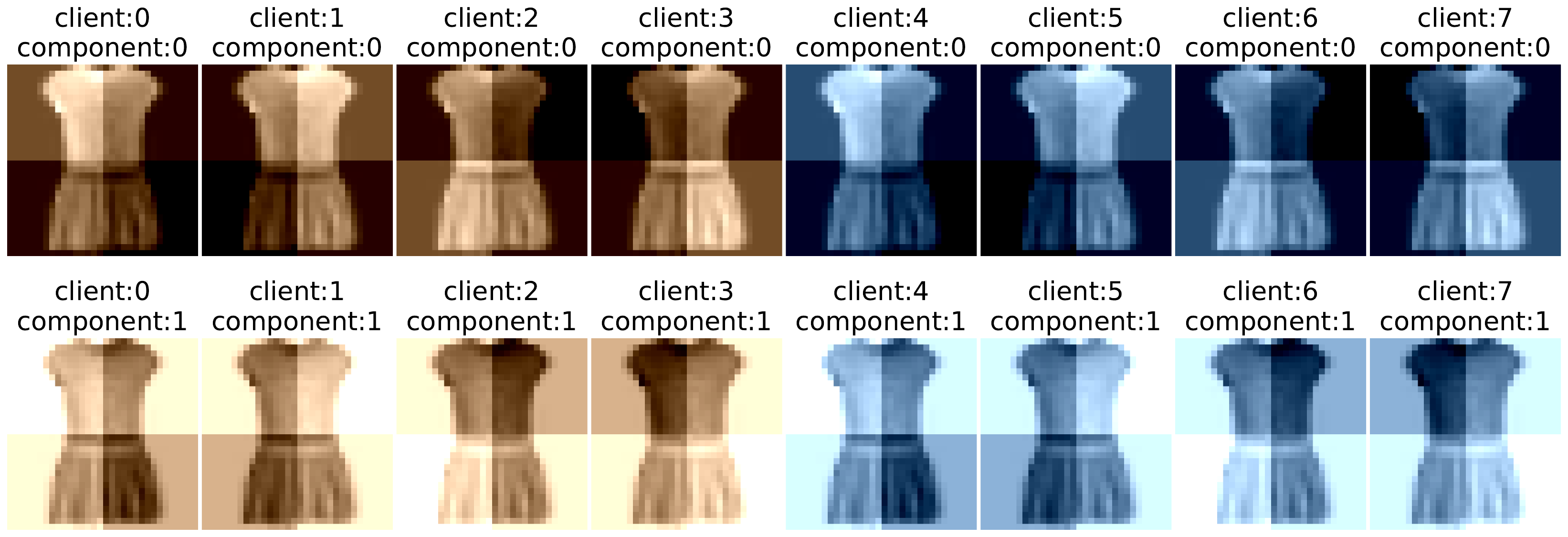}
\caption{Visualization of an FMNIST sample under client- and component-level transformations.}
\label{fig:transform_visualization}
\end{figure}

\begin{figure}[!ht]
\centering
\includegraphics[width=0.95\linewidth]{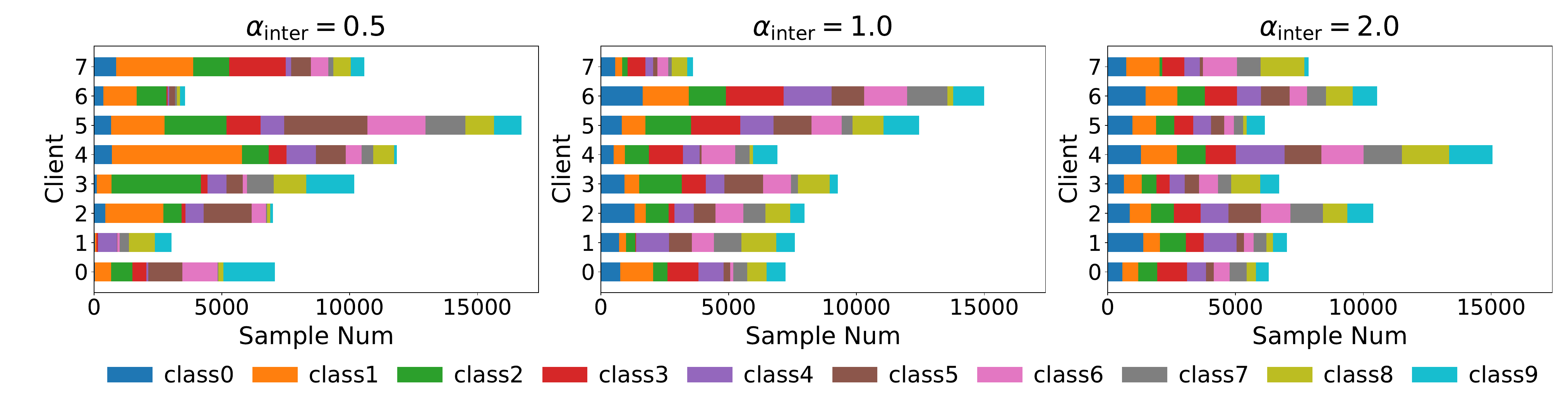}
\caption{Visualization of client label distributions under varying $\alpha_{\text{inter}}$.}
\label{fig:fmnist_inter}
\end{figure}

\begin{figure}[!ht]
\centering
\includegraphics[width=0.95\linewidth]{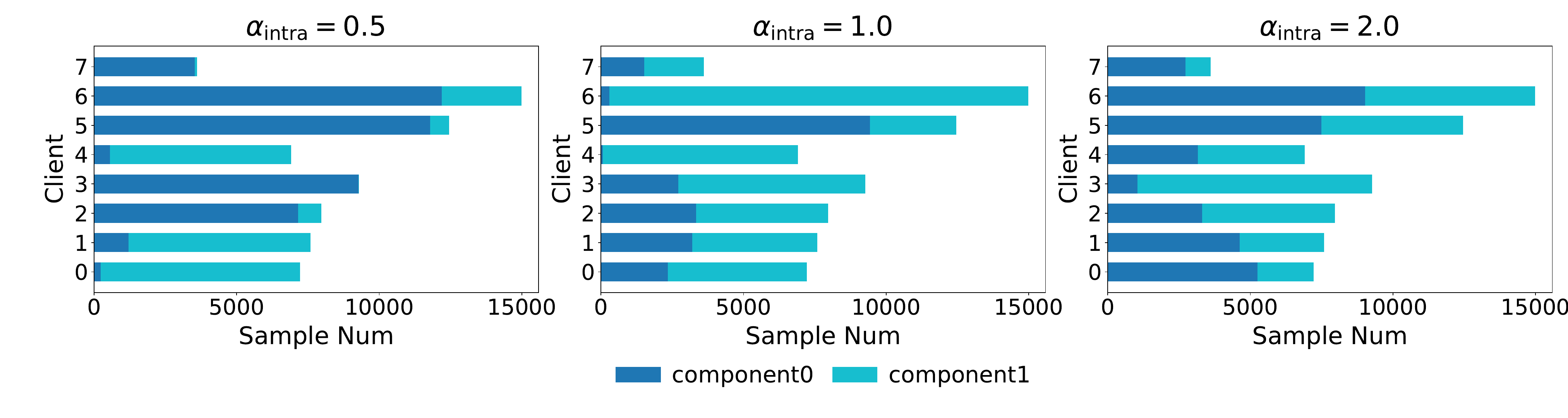}
\caption{Visualization of client mixing weights under varying $\alpha_{\text{intra}}$.}
\label{fig:fmnist_intra}
\end{figure}

\subsection{Training Details on Benchmark Datasets}\label{app:benchmark_training_details}

We employ local SGD with momentum, a batch size of 128, and an initial learning rate of 0.01 with cosine annealing. 
Each client performs 10 local steps per round with momentum $0.9$ for image classification, and 15 local steps with momentum $0.95$ for client routing. 
The encoder output dimensions and total communication rounds are set to 32 and 200 for FMNIST, and 64 and 400 for both CIFAR-10 and CIFAR-100. 
For fine-tuning-based methods, we additionally perform one epoch of local fine-tuning before evaluation. 
For cluster- and mixture-model-based methods, we set the number of components $C$ to 3.

\subsection{Estimation of the Bridge Errors}\label{app:estimation_of_the_bridge_errors}
This section details the empirical estimation of the bridge errors $\varepsilon^{(t)}$ and $\delta^{(t)}$ reported in Fig.~\ref{fig:impact_of_mu}. 
Computing these quantities requires evaluating the DRM objective $f^{(t)}$ and its gradient $\nabla f^{(t)}$, both of which depend on the Lagrange multipliers $\{\lambda_{ic}\}$.
Therefore, at each communication round, we numerically solve the nonlinear system in \eqref{eq:Lagrange_multipliers_system} for $\{\lambda_{ic}\}$ using a Jacobian-based root solver, followed by a trust-region least-squares refinement when necessary.
The resulting multipliers are then used to compute $\{f^{(t)}(\phi^{(t)})\}_{t=0}^{T-1}$ and $\{\nabla f^{(t)}(\phi^{(t)})\}_{t=0}^{T-1}$ along the training trajectory.

\textbf{Estimation of the gradient bridge error $\varepsilon^{(t)}$.}
Assumption~\ref{assumption:gradient_bridge} gives
\begin{equation*}
    \mathbb{E}\left[\left\|\nabla f^{(t)}(\phi^{(t)})\right\|^2\right]\leq \Gamma_2\mathbb{E}\left[\left\|\nabla\widetilde{f}^{(t)}(\phi^{(t)})\right\|^2\right]+\varepsilon^{(t)}.
\end{equation*}
Motivated by this inequality, we first estimate the coefficient $\Gamma_2$ from the ratios between the gradient norms of the DRM objective and its tractable surrogate.
To prevent numerical instability when the surrogate gradient norm vanishes in the denominator, our evaluation is restricted to the valid set:
\begin{equation*}
    \mathcal{T}_{\text{valid}}\coloneqq \left\{t:\left\|\nabla\widetilde{f}^{(t)}(\phi^{(t)})\right\|^2>10^{-9}\right\}.
\end{equation*}
We then estimate $\Gamma_2$ by the 95\% quantile of the valid ratios to ensure robustness against outliers:
\begin{equation*}
    \widehat{\Gamma}_2\coloneqq 
\text{Quantile}_{0.95}\left\{
\frac{
\left\|\nabla f^{(t)}(\phi^{(t)})\right\|^2
}{
\left\|\nabla\widetilde{f}^{(t)}(\phi^{(t)})\right\|^2
}
: t\in\mathcal{T}_{\mathrm{valid}}
\right\}.
\end{equation*}
Finally, given $\widehat{\Gamma}_2$, $\varepsilon^{(t)}$ is computed as the remaining positive gap:
\begin{equation*}
    \widehat{\varepsilon}^{(t)}\coloneqq \max\left(\left\|\nabla f^{(t)}(\phi^{(t)})\right\|^2-\widehat{\Gamma}_2\left\|\nabla \widetilde{f}^{(t)}(\phi^{(t)})\right\|^2,0\right).
\end{equation*}

\textbf{Estimation of the function-value bridge error $\delta^{(t)}$.}
Assumption~\ref{assumption:function-value_bridge} gives
\begin{equation*}
    \mathbb{E}\left[f^{(t)}(\phi^{(t+1)})-f^{(t)}(\phi^{(t)})\right]\leq \mathbb{E}\left[\widetilde{f}^{(t)}(\phi^{(t+1)})-\widetilde{f}^{(t)}(\phi^{(t)})\right] +\delta^{(t)}.
\end{equation*}
Accordingly, we estimate $\delta^{(t)}$ as
\begin{equation*}
    \widehat{\delta}^{(t)}\coloneqq \max\left(\left[f^{(t)}(\phi^{(t+1)})-f^{(t)}(\phi^{(t)})\right]-\left[\widetilde{f}^{(t)}(\phi^{(t+1)})-\widetilde{f}^{(t)}(\phi^{(t)})\right],0\right).
\end{equation*}

\subsection{Training Details on Real Medical Dataset}\label{app:isic2019_training_details}
The training setup for Fed-ISIC2019 largely follows that of the benchmark experiments, with several adjustments. Specifically, the encoder output dimension is set to 256 for image classification and 128 for client routing. The number of communication rounds is set to 400, and the batch size is set to 32.

%% file: main.bbl
\begin{thebibliography}{50}
\providecommand{\natexlab}[1]{#1}
\providecommand{\url}[1]{\texttt{#1}}
\expandafter\ifx\csname urlstyle\endcsname\relax
  \providecommand{\doi}[1]{doi: #1}\else
  \providecommand{\doi}{doi: \begingroup \urlstyle{rm}\Url}\fi

\bibitem[Acar et~al.(2021)Acar, Zhao, Matas, Mattina, Whatmough, and
  Saligrama]{acar2021feddyn}
D.~A.~E. Acar, Y.~Zhao, R.~Matas, M.~Mattina, P.~Whatmough, and V.~Saligrama.
\newblock Federated learning based on dynamic regularization.
\newblock In \emph{International Conference on Learning Representations}, 2021.

\bibitem[Anderson(1979)]{anderson1979drm}
J.~A. Anderson.
\newblock Multivariate logistic compounds.
\newblock \emph{Biometrika}, 66\penalty0 (1):\penalty0 17--26, 1979.

\bibitem[Arivazhagan et~al.(2019)Arivazhagan, Aggarwal, Singh, and
  Choudhary]{arivazhagan2019fedper}
M.~G. Arivazhagan, V.~Aggarwal, A.~K. Singh, and S.~Choudhary.
\newblock Federated learning with personalization layers.
\newblock \emph{arXiv preprint arXiv:1912.00818}, 2019.

\bibitem[Braga et~al.(2008)Braga, Scope, Klaz, Mecca, Spencer, and
  Marghoob]{braga2008melanoma}
J.~C.~T. Braga, A.~Scope, I.~Klaz, P.~Mecca, P.~Spencer, and A.~A. Marghoob.
\newblock Melanoma mimicking seborrheic keratosis: an error of perception
  precluding correct dermoscopic diagnosis.
\newblock \emph{Journal of the American Academy of Dermatology}, 58\penalty0
  (5):\penalty0 875--880, 2008.

\bibitem[Cai et~al.(2017)Cai, Chen, and Zidek]{cai2017hypothesis}
S.~Cai, J.~Chen, and J.~V. Zidek.
\newblock Hypothesis testing in the presence of multiple samples under density
  ratio models.
\newblock \emph{Statistica Sinica}, 27\penalty0 (2):\penalty0 761--783, 2017.

\bibitem[Carrera et~al.(2017)Carrera, Segura, Aguilera, Scalvenzi, Longo,
  Barreiro, Broganelli, Cavicchini, Llambrich, Zaballos, Thomas, Malvehy, Puig,
  and Zalaudek]{carrera2017dermoscopic}
C.~Carrera, S.~Segura, P.~Aguilera, M.~Scalvenzi, C.~Longo, A.~Barreiro,
  P.~Broganelli, S.~Cavicchini, A.~Llambrich, P.~Zaballos, L.~Thomas,
  J.~Malvehy, S.~Puig, and I.~Zalaudek.
\newblock Dermoscopic clues for diagnosing melanomas that resemble seborrheic
  keratosis.
\newblock \emph{JAMA Dermatology}, 153\penalty0 (6):\penalty0 544--551, 2017.

\bibitem[Changchien et~al.(2007)Changchien, Dusza, Agero, Korzenko, Braun,
  Sachs, Usman, Halpern, and Marghoob]{changchien2007age}
L.~Changchien, S.~W. Dusza, A.~L.~C. Agero, A.~J. Korzenko, R.~P. Braun,
  D.~Sachs, M.~H.~U. Usman, A.~C. Halpern, and A.~A. Marghoob.
\newblock Age- and site-specific variation in the dermoscopic patterns of
  congenital melanocytic nevi: an aid to accurate classification and assessment
  of melanocytic nevi.
\newblock \emph{Archives of Dermatology}, 143\penalty0 (8):\penalty0
  1007--1014, 2007.

\bibitem[Chen and Liu(2013)]{chen2013quantile}
J.~Chen and Y.~Liu.
\newblock Quantile and quantile-function estimations under density ratio model.
\newblock \emph{The Annals of Statistics}, 41\penalty0 (3):\penalty0
  1669--1692, 2013.

\bibitem[Dinh et~al.(2020)Dinh, Tran, and Nguyen]{dinh2020pfedme}
C.~T. Dinh, N.~H. Tran, and T.~D. Nguyen.
\newblock Personalized federated learning with moreau envelopes.
\newblock In \emph{Advances in Neural Information Processing Systems}, 2020.

\bibitem[Ghosh et~al.(2020)Ghosh, Chung, Yin, and Ramchandran]{ghosh2020ifca}
A.~Ghosh, J.~Chung, D.~Yin, and K.~Ramchandran.
\newblock An efficient framework for clustered federated learning.
\newblock In \emph{Advances in Neural Information Processing Systems}, 2020.

\bibitem[He et~al.(2016)He, Zhang, Ren, and Sun]{he2016resnet}
K.~He, X.~Zhang, S.~Ren, and J.~Sun.
\newblock Deep residual learning for image recognition.
\newblock In \emph{IEEE Conference on Computer Vision and Pattern Recognition},
  2016.

\bibitem[Ho et~al.(2022)Ho, Yang, and Jordan]{ho2022moe}
N.~Ho, C.-Y. Yang, and M.~I. Jordan.
\newblock Convergence rates for gaussian mixtures of experts.
\newblock \emph{Journal of Machine Learning Research}, 23\penalty0
  (323):\penalty0 1--81, 2022.

\bibitem[Hsu et~al.(2019)Hsu, Qi, and Brown]{hsu2019fedavgm}
H.~Hsu, H.~Qi, and M.~Brown.
\newblock Measuring the effects of non-identical data distribution for
  federated visual classification.
\newblock \emph{arXiv preprint arXiv:1909.06335}, 2019.

\bibitem[Kairouz et~al.(2021)Kairouz, McMahan, Avent, Bellet, Bennis, Bhagoji,
  Bonawitz, Charles, Cormode, Cummings, D'Oliveira, Eichner, Rouayheb, Evans,
  Gardner, Garrett, Gasc{\'o}n, Ghazi, Gibbons, Gruteser, Harchaoui, He, He,
  Huo, Hutchinson, Hsu, Jaggi, Javidi, Joshi, Khodak, Kone{\v c}n{\'y},
  Korolova, Koushanfar, Koyejo, Lepoint, Liu, Mittal, Mohri, Nock,
  {\"O}zg{\"u}r, Pagh, Qi, Ramage, Raskar, Raykova, Song, Song, Stich, Sun,
  Suresh, Tram{\`e}r, Vepakomma, Wang, Xiong, Xu, Yang, Yu, Yu, and
  Zhao]{kairouz2021survey}
P.~Kairouz, H.~B. McMahan, B.~Avent, A.~Bellet, M.~Bennis, A.~N. Bhagoji,
  K.~Bonawitz, Z.~Charles, G.~Cormode, R.~Cummings, R.~G.~L. D'Oliveira,
  H.~Eichner, S.~E. Rouayheb, D.~Evans, J.~Gardner, Z.~Garrett, A.~Gasc{\'o}n,
  B.~Ghazi, P.~B. Gibbons, M.~Gruteser, Z.~Harchaoui, C.~He, L.~He, Z.~Huo,
  B.~Hutchinson, J.~Hsu, M.~Jaggi, T.~Javidi, G.~Joshi, M.~Khodak, J.~Kone{\v
  c}n{\'y}, A.~Korolova, F.~Koushanfar, S.~Koyejo, T.~Lepoint, Y.~Liu,
  P.~Mittal, M.~Mohri, R.~Nock, A.~{\"O}zg{\"u}r, R.~Pagh, H.~Qi, D.~Ramage,
  R.~Raskar, M.~Raykova, D.~Song, W.~Song, S.~U. Stich, Z.~Sun, A.~T. Suresh,
  F.~Tram{\`e}r, P.~Vepakomma, J.~Wang, L.~Xiong, Z.~Xu, Q.~Yang, F.~X. Yu,
  H.~Yu, and S.~Zhao.
\newblock Advances and open problems in federated learning.
\newblock \emph{Foundations and Trends in Machine Learning}, 14\penalty0
  (1--2):\penalty0 1--210, 2021.

\bibitem[Karimireddy et~al.(2020)Karimireddy, Kale, Mohri, Reddi, Stich, and
  Suresh]{karimireddy2020scaffold}
S.~P. Karimireddy, S.~Kale, M.~Mohri, S.~J. Reddi, S.~U. Stich, and A.~T.
  Suresh.
\newblock Scaffold: Stochastic controlled averaging for federated learning.
\newblock In \emph{International Conference on Machine Learning}, 2020.

\bibitem[Kay and Little(1987)]{kay1987transformations}
R.~Kay and S.~Little.
\newblock Transformations of the explanatory variables in the logistic
  regression model for binary data.
\newblock \emph{Biometrika}, 74\penalty0 (3):\penalty0 495--501, 1987.

\bibitem[Keziou and Leoni-Aubin(2008)]{keziou2008empirical}
A.~Keziou and S.~Leoni-Aubin.
\newblock On empirical likelihood for semiparametric two-sample density ratio
  models.
\newblock \emph{Journal of Statistical Planning and Inference}, 138\penalty0
  (4):\penalty0 915--928, 2008.

\bibitem[Krizhevsky(2009)]{krizhevsky2009cifar}
A.~Krizhevsky.
\newblock Learning multiple layers of features from tiny images.
\newblock Technical report, University of Toronto, 2009.

\bibitem[LeCun et~al.(1998)LeCun, Bottou, Bengio, and Haffner]{lecun1998cnn}
Y.~LeCun, L.~Bottou, Y.~Bengio, and P.~Haffner.
\newblock Gradient-based learning applied to document recognition.
\newblock \emph{Proceedings of the IEEE}, 86\penalty0 (11):\penalty0
  2278--2324, 1998.

\bibitem[Li et~al.(2020{\natexlab{a}})Li, Fan, Tse, and Lin]{li2020application}
L.~Li, Y.~Fan, M.~Tse, and K.-Y. Lin.
\newblock A review of applications in federated learning.
\newblock \emph{Computers \& Industrial Engineering}, 149:\penalty0 106854,
  2020{\natexlab{a}}.

\bibitem[Li et~al.(2020{\natexlab{b}})Li, Sahu, Talwalkar, and
  Smith]{li2020survey}
T.~Li, A.~K. Sahu, A.~Talwalkar, and V.~Smith.
\newblock Federated learning: Challenges, methods, and future directions.
\newblock \emph{IEEE Signal Processing Magazine}, 37\penalty0 (3):\penalty0
  50--60, 2020{\natexlab{b}}.

\bibitem[Li et~al.(2020{\natexlab{c}})Li, Sahu, Zaheer, Sanjabi, Talwalkar, and
  Smith]{li2020fedprox}
T.~Li, A.~K. Sahu, M.~Zaheer, M.~Sanjabi, A.~Talwalkar, and V.~Smith.
\newblock Federated optimization in heterogeneous networks.
\newblock In \emph{Proceedings of Machine Learning and Systems},
  2020{\natexlab{c}}.

\bibitem[Li et~al.(2021)Li, Hu, Beirami, and Smith]{li2021ditto}
T.~Li, S.~Hu, A.~Beirami, and V.~Smith.
\newblock Ditto: Fair and robust federated learning through personalization.
\newblock In \emph{International Conference on Machine Learning}, 2021.

\bibitem[Li et~al.(2020{\natexlab{d}})Li, Huang, Yang, Wang, and
  Zhang]{li2020theory}
X.~Li, K.~Huang, W.~Yang, S.~Wang, and Z.~Zhang.
\newblock On the convergence of fedavg on non-iid data.
\newblock In \emph{International Conference on Learning Representations},
  2020{\natexlab{d}}.

\bibitem[Li et~al.(2023)Li, Lin, Shang, and Wu]{li2023fedlaw}
Z.~Li, T.~Lin, X.~Shang, and C.~Wu.
\newblock Revisiting weighted aggregation in federated learning with neural
  networks.
\newblock In \emph{International Conference on Machine Learning}, 2023.

\bibitem[Marfoq et~al.(2021)Marfoq, Neglia, Bellet, Kameni, and
  Vidal]{marfoq2021fedem}
O.~Marfoq, G.~Neglia, A.~Bellet, L.~Kameni, and R.~Vidal.
\newblock Federated multi-task learning under a mixture of distributions.
\newblock In \emph{Advances in Neural Information Processing Systems}, 2021.

\bibitem[McMahan et~al.(2017)McMahan, Moore, Ramage, Hampson, and
  y~Arcas]{mcmahan2017fedavg}
B.~McMahan, E.~Moore, D.~Ramage, S.~Hampson, and B.~A. y~Arcas.
\newblock Communication-efficient learning of deep networks from decentralized
  data.
\newblock In \emph{International Conference on Artificial Intelligence and
  Statistics}, 2017.

\bibitem[Nguyen et~al.(2023)Nguyen, Nguyen, and Ho]{nguyen2023moe}
H.~Nguyen, T.~Nguyen, and N.~Ho.
\newblock Demystifying {Softmax} gating function in {Gaussian} mixture of
  experts.
\newblock In \emph{Advances in Neural Information Processing Systems}, 2023.

\bibitem[Ogier~du Terrail et~al.(2022)Ogier~du Terrail, Ayed, Cyffers,
  Grimberg, He, Loeb, Mangold, Marchand, Marfoq, Mushtaq, Muzellec,
  Philippenko, Silva, Tele\'{n}czuk, Albarqouni, Avestimehr, Bellet,
  Dieuleveut, Jaggi, Karimireddy, Lorenzi, Neglia, Tommasi, and
  Andreux]{du_terrail2022flamby}
J.~Ogier~du Terrail, S.-S. Ayed, E.~Cyffers, F.~Grimberg, C.~He, R.~Loeb,
  P.~Mangold, T.~Marchand, O.~Marfoq, E.~Mushtaq, B.~Muzellec, C.~Philippenko,
  S.~Silva, M.~Tele\'{n}czuk, S.~Albarqouni, S.~Avestimehr, A.~Bellet,
  A.~Dieuleveut, M.~Jaggi, S.~P. Karimireddy, M.~Lorenzi, G.~Neglia,
  M.~Tommasi, and M.~Andreux.
\newblock Flamby: Datasets and benchmarks for cross-silo federated learning in
  realistic healthcare settings.
\newblock In \emph{Advances in Neural Information Processing Systems}, 2022.

\bibitem[Oh et~al.(2022)Oh, Kim, and Yun]{oh2022fedbabu}
J.~Oh, S.~Kim, and S.-Y. Yun.
\newblock Fedbabu: Toward enhanced representation for federated image
  classification.
\newblock In \emph{International Conference on Learning Representations}, 2022.

\bibitem[Owen(1990)]{owen1990empirical}
A.~B. Owen.
\newblock Empirical likelihood ratio confidence regions.
\newblock \emph{The Annals of Statistics}, 18\penalty0 (1):\penalty0 90--120,
  1990.

\bibitem[Qin(1993)]{qin1993empirical}
J.~Qin.
\newblock Empirical likelihood in biased sample problems.
\newblock \emph{The Annals of Statistics}, 21\penalty0 (3):\penalty0
  1182--1196, 1993.

\bibitem[Qin(1999)]{qin1999empirical}
J.~Qin.
\newblock Empirical likelihood ratio based confidence intervals for mixture
  proportions.
\newblock \emph{The Annals of Statistics}, 27\penalty0 (4):\penalty0
  1368--1384, 1999.

\bibitem[Reddi et~al.(2021)Reddi, Charles, Zaheer, Garrett, Rush,
  Kone{\v{c}}n{\'y}, Kumar, and McMahan]{reddi2021fedopt}
S.~J. Reddi, Z.~Charles, M.~Zaheer, Z.~Garrett, K.~Rush, J.~Kone{\v{c}}n{\'y},
  S.~Kumar, and H.~B. McMahan.
\newblock Adaptive federated optimization.
\newblock In \emph{International Conference on Learning Representations}, 2021.

\bibitem[Sattler et~al.(2021)Sattler, M{\"u}ller, and
  Samek]{sattler2021clusterfl}
F.~Sattler, K.-R. M{\"u}ller, and W.~Samek.
\newblock Clustered federated learning: Model-agnostic distributed multi-task
  optimization under privacy constraints.
\newblock \emph{IEEE Transactions on Neural Networks and Learning Systems},
  32\penalty0 (8):\penalty0 3710--3722, 2021.

\bibitem[Stich(2019)]{stich2018localsgd}
S.~U. Stich.
\newblock Local {SGD} converges fast and communicates little.
\newblock In \emph{International Conference on Learning Representations}, 2019.

\bibitem[Tan et~al.(2023)Tan, Chen, Zhuang, Dong, Lyu, and Long]{tan2023is}
Y.~Tan, C.~Chen, W.~Zhuang, X.~Dong, L.~Lyu, and G.~Long.
\newblock Is heterogeneity notorious? taming heterogeneity to handle test-time
  shift in federated learning.
\newblock In \emph{Advances in Neural Information Processing Systems}, 2023.

\bibitem[Wang et~al.(2020)Wang, Liu, Liang, Joshi, and Poor]{wang2020fednova}
J.~Wang, Q.~Liu, H.~Liang, G.~Joshi, and H.~V. Poor.
\newblock Tackling the objective inconsistency problem in heterogeneous
  federated optimization.
\newblock In \emph{Advances in Neural Information Processing Systems}, 2020.

\bibitem[Wang et~al.(2019)Wang, Mathews, Kiddon, Eichner, Beaufays, and
  Ramage]{wang2019ft}
K.~Wang, R.~Mathews, C.~Kiddon, H.~Eichner, F.~Beaufays, and D.~Ramage.
\newblock {Federated Evaluation of On-device Personalization}.
\newblock \emph{arXiv preprint arXiv:1910.10252}, 2019.

\bibitem[Wang et~al.(2026)Wang, Zhang, Zhang, Liu, and Zhang]{wang2026feddrm}
Z.~Wang, X.~Zhang, X.~Zhang, Y.~Liu, and Q.~Zhang.
\newblock Beyond aggregation: Guiding clients in heterogeneous federated
  learning.
\newblock In \emph{International Conference on Learning Representations}, 2026.

\bibitem[Wu et~al.(2023)Wu, Zhang, Yu, Liu, Gu, Zhou, Chen, and
  Cheng]{wu2023fedgmm}
Y.~Wu, S.~Zhang, W.~Yu, Y.~Liu, Q.~Gu, D.~Zhou, H.~Chen, and W.~Cheng.
\newblock Personalized federated learning under mixture of distributions.
\newblock In \emph{International Conference on Machine Learning}, 2023.

\bibitem[Xiao et~al.(2017)Xiao, Rasul, and Vollgraf]{xiao2017fashionmnist}
H.~Xiao, K.~Rasul, and R.~Vollgraf.
\newblock Fashion-mnist: a novel image dataset for benchmarking machine
  learning algorithms.
\newblock \emph{arXiv preprint arXiv:1708.07747}, 2017.

\bibitem[Xu et~al.(2021)Xu, Glicksberg, Su, Walker, Bian, and
  Wang]{xu2021application}
J.~Xu, B.~S. Glicksberg, C.~Su, P.~Walker, J.~Bian, and F.~Wang.
\newblock Federated learning for healthcare informatics.
\newblock \emph{Journal of Healthcare Informatics Research}, 5\penalty0
  (1):\penalty0 1--19, 2021.

\bibitem[Yang et~al.(2019)Yang, Liu, Chen, and Tong]{yang2019federated}
Q.~Yang, Y.~Liu, T.~Chen, and Y.~Tong.
\newblock Federated machine learning: Concept and applications.
\newblock \emph{ACM Transactions on Intelligent Systems and Technology},
  10\penalty0 (2):\penalty0 1--19, 2019.

\bibitem[Yurochkin et~al.(2019)Yurochkin, Agarwal, Ghosh, Greenewald, Hoang,
  and Khazaeni]{yurochkin2019bayesian}
M.~Yurochkin, M.~Agarwal, S.~Ghosh, K.~Greenewald, N.~Hoang, and Y.~Khazaeni.
\newblock Bayesian nonparametric federated learning of neural networks.
\newblock In \emph{International Conference on Machine Learning}, 2019.

\bibitem[Zalaudek et~al.(2006)Zalaudek, Grinschgl, Argenziano, Marghoob, Blum,
  Richtig, Wolf, Fink-Puches, Kerl, Soyer, and
  Hofmann-Wellenhof]{zalaudek2006age}
I.~Zalaudek, S.~Grinschgl, G.~Argenziano, A.~A. Marghoob, A.~Blum, E.~Richtig,
  I.~H. Wolf, R.~Fink-Puches, H.~Kerl, H.~P. Soyer, and R.~Hofmann-Wellenhof.
\newblock Age-related prevalence of dermoscopy patterns in acquired melanocytic
  naevi.
\newblock \emph{British Journal of Dermatology}, 154\penalty0 (2):\penalty0
  299--304, 2006.

\bibitem[Zhang(2000)]{zhang2000quantile}
B.~Zhang.
\newblock Quantile estimation under a two-sample semi-parametric model.
\newblock \emph{Bernoulli}, 6\penalty0 (3):\penalty0 491--511, 2000.

\bibitem[Zhang(2002)]{zhang2002assessing}
B.~Zhang.
\newblock Assessing goodness-of-fit of generalized logit models based on
  case-control data.
\newblock \emph{Journal of Multivariate Analysis}, 82\penalty0 (1):\penalty0
  17--38, 2002.

\bibitem[Zhang et~al.(2026)Zhang, Tian, and Li]{zhang2026neyman}
Q.~Zhang, Q.~Tian, and P.~Li.
\newblock Neyman-pearson multiclass classification under label noise via
  empirical likelihood.
\newblock \emph{arXiv preprint arXiv:2603.21623}, 2026.

\bibitem[Zhao et~al.(2018)Zhao, Li, Lai, Suda, Civin, and
  Chandra]{zhao2018federated}
Y.~Zhao, M.~Li, L.~Lai, N.~Suda, D.~Civin, and V.~Chandra.
\newblock Federated learning with {Non-IID} data.
\newblock \emph{arXiv preprint arXiv:1806.00582}, 2018.

\end{thebibliography}
